\documentclass[runningheads]{llncs}

\usepackage{eccv}

\usepackage{eccvabbrv}

\usepackage{graphicx}
\usepackage{booktabs}
\usepackage{microtype}
\usepackage{multirow}
\usepackage{enumitem}
\usepackage[ruled,vlined]{algorithm2e}
\usepackage[table]{xcolor}
\newtheorem{assumption}{Assumption}

\usepackage[accsupp]{axessibility}  

\usepackage[pagebackref,breaklinks,colorlinks,citecolor=eccvblue]{hyperref}

\usepackage{orcidlink}

\begin{document}

\title{SteerFlow: Steering Rectified Flows for Faithful Inversion-Based Image Editing} 

\titlerunning{SteerFlow}




\author{Thinh Dao\inst{1,2}\thanks{Work done during Thinh Dao's visiting period at HKUST.} \and
Zhen Wang\inst{1} \and
Kien T.Pham\inst{1} \and
Long Chen\inst{1}\thanks{Corresponding author.}}

\authorrunning{Dao et al.}

\institute{The Hong Kong University of Science and Technology, Clear Water Bay, Hong Kong \and
VinUniversity, Ha Noi, Viet Nam\\
\email{\{zhenwang,longchen\}@ust.hk, \{dtdao,tkpham\}@connect.ust.hk}}

\maketitle

\newcommand{\zhen}[1]{{\color{orange}{$^\textbf{\emph{Zhen:}}$[#1]}}}
\newcommand{\edited}[1]{\textcolor{blue}{#1}}

\begin{abstract}
  Recent advances in flow-based generative models have enabled training-free, text-guided image editing by inverting an image into its latent noise and regenerating it under a new target conditional guidance. However, existing methods struggle to preserve source fidelity: higher-order solvers incur additional model inferences, truncated inversion constrains editability, and feature injection methods lack architectural transferability. To address these limitations, we propose \textbf{SteerFlow}, a model-agnostic editing framework with strong theoretical guarantees on source fidelity. In the forward process, we introduce an \textit{Amortized Fixed-Point Solver} that implicitly straightens the forward trajectory by enforcing velocity consistency across consecutive timesteps, yielding a high-fidelity inverted latent. In the backward process, we introduce \textit{Trajectory Interpolation}, which adaptively blends target-editing and source-reconstruction velocities to keep the editing trajectory anchored to the source. To further improve background preservation, we introduce an \textit{Adaptive Masking} mechanism that spatially constrains the editing signal with concept-guided segmentation and source-target velocity differences. Extensive experiments on FLUX.1-dev and Stable Diffusion 3.5 Medium demonstrate that SteerFlow consistently achieves better editing quality than existing methods. Finally, we show that SteerFlow extends naturally to a complex multi-turn editing paradigm without accumulating drift.\footnote{https://github.com/thinh-dao/SteerFlow.git}
\end{abstract}

\section{Introduction}


Recent Rectified Flow (RF) models have achieved state-of-the-art image generation by learning a linear transport trajectory between a noise distribution (\eg, Gaussian distribution) and a data distribution~\cite{liu2022flow,lipman2022flow}. Beyond generation, pretrained text-to-image (T2I) RF models~\cite{flux1dev,sd35medium} have been adapted for \textit{training-free}, \textit{text-guided} image editing — editing a \textit{source} image to faithfully follow a \textit{target} prompt while preserving its original structure. Successful editing thus requires a delicate balance between target alignment and source consistency.
\begin{figure}[t]
    \centering
    \includegraphics[width=1.0\textwidth]{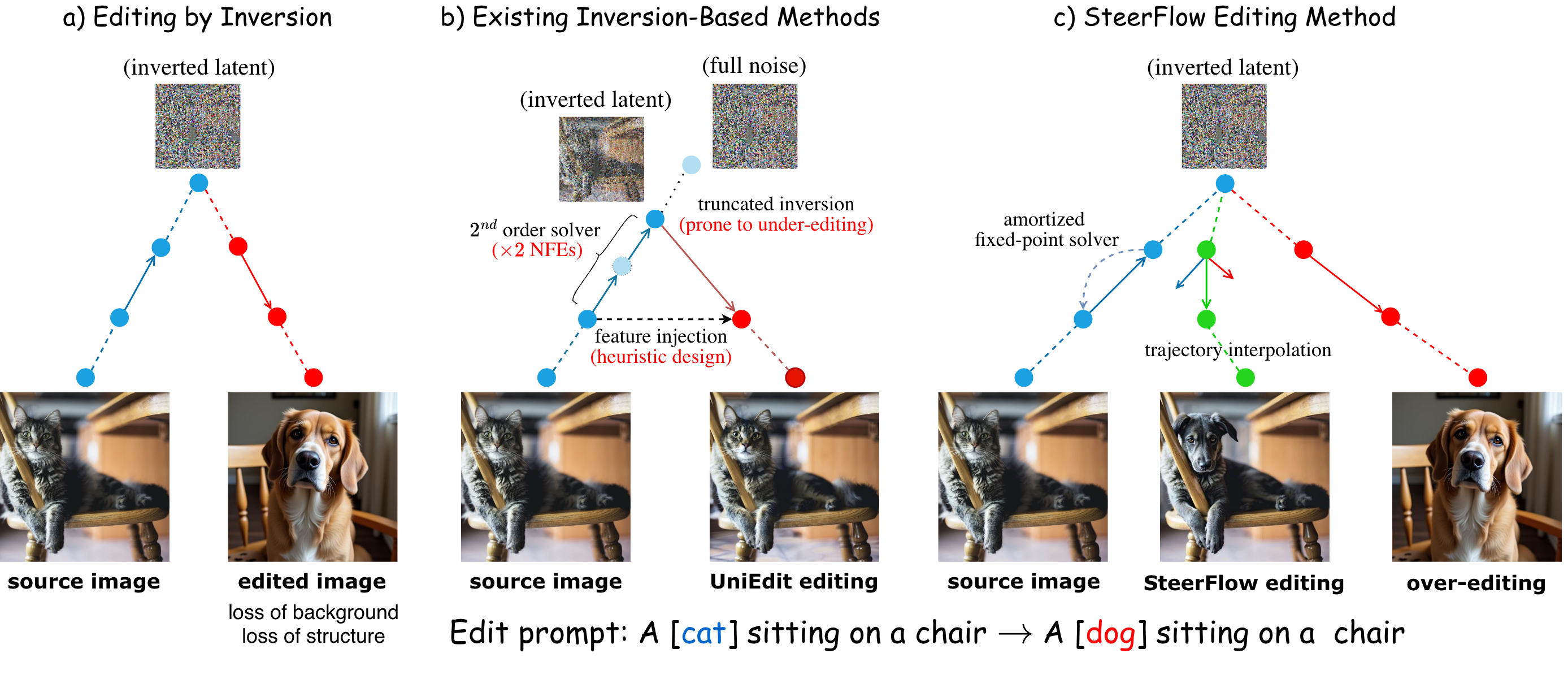}
    \caption{Comparison of SteerFlow and existing inversion-based methods. (a) Unconstrained generation causes over-editing with loss of source structure. (b) Prior approaches, represented by UniEdit~\cite{jiao2026unieditflow}, either rely on computationally expensive second-order solvers, or heuristic feature injection/truncated inversion that often suffers from under-editing (as shown by the unchanged source subject) and limited architectural transferability. (c) In contrast, SteerFlow presents a principled, architecture-agnostic editing framework that achieves high source-fidelity and faithful target alignment.}
    \label{fig:method_comparison}
\end{figure}

Leveraging the learned transport between data and noise distributions, this task is commonly addressed via a \emph{two-stage} process (\cref{fig:method_comparison} (a)): \textbf{Forward Inversion} maps the source image to an \textit{inverted latent} in the noise distribution, and \textbf{Backward Generation} synthesizes the edited image from this latent under the \textit{target} prompt~\cite{rout2024rfinversion,wang2024taming,deng2024fireflowfastinversionrectified,jiao2026unieditflow}. The quality of such editing critically depends on inversion accuracy: if the inverted latent decodes back perfectly to the source image, we can reasonably expect the target-conditioned generation to produce controlled edits with strong structural preservation. However, this editing paradigm faces two main challenges that degrade source fidelity in practice. (i) \textbf{Inversion Error:} Euler discretization of the ODE introduces truncation errors at each step that result in compounded velocity mismatch between forward and backward trajectories. (ii) \textbf{Trajectory Divergence:} Even with a perfect inverted latent, the inherent disparity between source and target velocities can still cause the editing outcome to drift significantly from the original image. As shown in \cref{fig:method_comparison} (a), unconstrained generation causes the generated dog image to lose source structural information (the cat's pose, the chair, and the table).

To address (i), recent works introduce higher-order solvers~\cite{ma2025adamsbashforthmoultonsolver,deng2024fireflowfastinversionrectified,wang2024taming} designed to reduce the order of local truncation error. While they improve inversion accuracy, they come at the cost of additional number of function evaluations (NFEs). To address (ii), a straightforward solution is \textbf{truncated inversion}, which halts the forward process at an intermediate timestep where source structure is still partially preserved~\cite{jiao2026unieditflow,rout2024rfinversion}. Another related approach is \textbf{feature injection}, which extracts internal model representations from the source forward pass (\eg, attention maps~\cite{prompt2prompt,cao2023masactrl} and value features~\cite{wang2024taming,deng2024fireflowfastinversionrectified} of select transformer blocks) and injects them into the early steps of the backward generation process. However, since these strategies are heuristic by design, they struggle to achieve consistent performance. Truncated inversion is sensitive to the choice of truncation timestep, where stopping too early causes over-editing and stopping too late causes under-editing. Feature injection requires manual tuning of which blocks to inject and for how many steps, limiting their transferability across different architectures.

To address the limitations of these heuristic designs, we rigorously analyze the \textit{inversion error} and \textit{editing error} of flow-based image editing. Our findings motivate the design of \textbf{SteerFlow}, a \textit{model-agnostic} editing framework with strong theoretical guarantees that achieves a faithful and controllable trade-off between source fidelity and target alignment. Specifically, as shown in \cref{fig:method_comparison}~(c), in the forward process, we bound the inversion error of Euler's method and introduce an efficient \textbf{Amortized Fixed-Point Solver} that implicitly straightens the forward trajectory by enforcing velocity consistency across consecutive timesteps, thereby reducing the forward-backward velocity mismatch and minimizing inversion error. In the backward process, we justify the need for source anchoring and construct an editing velocity via \textbf{Trajectory Interpolation}, which adaptively blends target-editing and source-reconstruction velocities. This design addresses trajectory divergence, ensuring that the editing outcome preserves source structure while still satisfying the target prompt. Trajectory Interpolation naturally induces an \textbf{Adaptive Masking} mechanism that acts as a \textit{spatial gate} on editing velocity, thereby improving editing localization and background preservation. This mechanism first obtains a base mask from SAM3~\cite{carion2025sam3segmentconcepts} through the source editing tokens, and then adaptively refines it with source-target velocity differentials during the backward process to produce precise, content-aware edit boundaries. Extensive experiments on PIE-Bench~\cite{ju2023direct} using FLUX.1-dev~\cite{flux1dev} and Stable Diffusion 3.5 Medium~\cite{sd35medium} demonstrate that SteerFlow consistently achieves a better trade-off between source preservation and target alignment than existing methods. Building on this foundation, we further extend SteerFlow to complex multi-turn editing, enabling iterative and compositional image modifications without accumulating significant drift from the source image. 

In summary, our main contributions are as follows:
\begin{enumerate}
\item We provide a formal analysis of Inversion Error and Trajectory Divergence, establishing error bounds that motivate our framework design and yield theoretical guarantees on source fidelity.

\item We propose SteerFlow, a principled editing framework that consists of an \textbf{Amortized Fixed-Point Solver} in the forward process and \textbf{Trajectory Interpolation} in the backward process that altogether minimizes trajectory divergence. SteerFlow further introduces an \textbf{Adaptive Masking} mechanism that constrains editing leakage and improves background preservation.

\item Extensive experiments on PIE-Bench~\cite{ju2023direct} show that SteerFlow outperforms existing methods in source preservation and target alignment. Furthermore, we show that SteerFlow naturally extends to multi-turn editing without accumulating drift across editing turns.

\end{enumerate}


\section{Related Works}
\label{sec:related}
\subsection{Inversion-Based Image Editing} 
Inversion-based image editing has been explored extensively for diffusion models via DDIM inversion~\cite{song2020denoising,couairon2023diffedit,mokady2022nulltext,masactrl,miyake2024negative}. Recent works adapt this to Rectified Flow (RF) models, exploiting their straighter transport paths to improve inversion accuracy. For example, \cite{rout2024rfinversion} proposed \textit{controller guidance} that simultaneously minimizes the deviation of the inverted latent from the noise prior and the editing output from the source image. Another line of work focuses on improving reconstruction fidelity through higher-order ODE solvers~\cite{deng2024fireflowfastinversionrectified,wang2024taming,ma2025adamsbashforthmoultonsolver}, or by refining the inverted latent with additional correction steps~\cite{jiao2026unieditflow}. While these approaches can improve image reconstruction, they are not guaranteed to ensure high source consistency for image editing since they do not resolve the \textbf{trajectory divergence} issue between source and target velocity fields. 

To enforce source preservation, previous methods often employ truncated inversion, which halts the inversion process at an earlier timestep and starts the backward process from there~\cite{jiao2026unieditflow,rout2024rfinversion}. Another line of work investigates feature injection, which transfers attention features (\eg, Values or Attention Maps) from the forward velocities to the backward velocities, typically in the first few steps~\cite{prompt2prompt,cao2023masactrl,brack2024leditspp,avrahami2024stable,deng2024fireflowfastinversionrectified,wang2024taming,zhu2025kv}. However, since both approaches rely on architecture-specific heuristics (\eg, manual selection of injection layers or truncated steps), they lack generalization across different architectures.


\subsection{Inversion-Free Image Editing}
To avoid reliance on accurate inversion, recent works have explored \emph{inversion-free} editing for RF models~\cite{kulikov2025flowedit,kim2025flowalign,yoon2025splitflow,wang2025flowcycle,jiang2025flowdcflowbaseddecouplingdecaycomplex}. FlowEdit~\cite{kulikov2025flowedit} pioneered this direction by replacing the inverted latent with a random pairing of the source image and Gaussian noise sampled at every timestep. Although FlowEdit could maintain strong structural consistency in certain cases, the method is inherently stochastic, so the final output can vary significantly across different random seeds. Building on FlowEdit, FlowAlign~\cite{kim2025flowalign} addresses the unstable editing trajectories inherent to inversion-free methods by introducing a terminal point regularization term that explicitly balances semantic alignment with the edit prompt and structural consistency with the source image. 

\textbf{SteerFlow} is an inversion-based editing method that departs from previous architecture-specific heuristics by framing backward generation as a trajectory-control problem. By dynamically anchoring the editing path to the source reconstruction, we guarantee strong structural consistency and target alignment.

\section{Preliminaries}
\subsection{Rectified Flow}
Rectified Flow (RF)~\cite{liu2022flow} is a generative framework that constructs a linear transport map from the data distribution $\pi_0 = p_{\text{data}}$ to the noise distribution $\pi_1 = \mathcal{N}(0, I_d)$. Given a clean image $Z_0 \sim \pi_0$ and Gaussian noise $Z_1 \sim \pi_1$, RF defines a linear interpolation path $Z_t = tZ_1 + (1-t)Z_0$ between coupled samples $(Z_0, Z_1) \sim \pi_0 \times \pi_1$. The probability flow is governed by the ODE $dZ_t = v_\theta(Z_t, t)\,dt$, where $v_\theta: \mathbb{R}^d \times [0,1] \to \mathbb{R}^d$ is a learnable velocity field trained with the Conditional Flow Matching (CFM) objective:
\begin{equation}
    \mathcal{L}_{\text{CFM}}(\theta) = \mathbb{E}_{t \sim \mathcal{U}[0,1]} 
    \left[ \| v_\theta(Z_t, t) - (Z_1 - Z_0) \|^2 \right].
    \label{eq:cfm}
\end{equation}

\noindent\textbf{Inference.} For text-to-image generation, the velocity field is additionally conditioned on a text prompt $c$, yielding $v_\theta(Z_t, t, c)$. Image generation is achieved by solving the ODE backward from $t=1$ to $t=0$. In practice, this continuous process is approximated using a numerical solver, such as the Euler method:
\begin{equation}
    Z_{t-\Delta t} = Z_t - v_\theta(Z_t, t, c)\,\Delta t.
    \label{equ:rf_generation_process}
\end{equation}

\noindent\textbf{Classifier-Free Guidance (CFG).} To enhance prompt alignment, 
CFG~\cite{ho2021classifierfree} scales the conditional velocity at inference:
\begin{equation}
\tilde{v}_\theta(Z_t, t, c) = v_\theta(Z_t, t, \varnothing) + 
w\left(v_\theta(Z_t, t, c) - v_\theta(Z_t, t, \varnothing)\right),
\end{equation}
where $w > 1$ is the guidance scale that amplifies the conditional signal.

\subsection{Inversion-based Image Editing}
In text-guided image editing, we start with a \textit{source} image $Z_0^{src}$ and an optional \textit{source} prompt $c_{src}$ that describes $Z_0^{src}$. The goal is to transform the source image into the \textit{target} image that follows the provided \textit{target} prompt $c_{tar}$. The editing process consists of two stages:

\noindent\textbf{1. Inversion (Forward Process).} The source image $Z_0^{src}$ is mapped to \textit{inverted latent} $Z_1^{src}$ by integrating the ODE from $t=0$ to $t=1$ conditioned on the source prompt $c_{src}$. The forward process does not use CFG ($w = 1$) to keep the latent on the data manifold:
\begin{align}
\label{eq:forward}
Z_1^{src} = Z_0^{src} + \int_{0}^{1} v_\theta(Z_t^{src}, t, c_{src}) \, dt && \text{(Inversion)}
\end{align}

\noindent\textbf{2. Generation (Backward Process).} Starting from the inverted noise $Z_1^{tar} = Z_1^{src}$, we solve the reverse-time ODE from $t=1$ to $t=0$ conditioned on the target prompt $c_{tar}$. The backward process typically uses CFG ($w > 1$) to enhance prompt adherence of the edited image:
\begin{align}
\label{eq:edit}
Z_0^{tar} &= Z_1^{src} - \int_{0}^{1} \tilde{v}_\theta(Z_t, t, c_{tar}, w) \, dt && \text{(Editing)}
\end{align}

\section{Methodology}

\begin{figure}[t]
    \centering
    \includegraphics[width=1.0\textwidth]{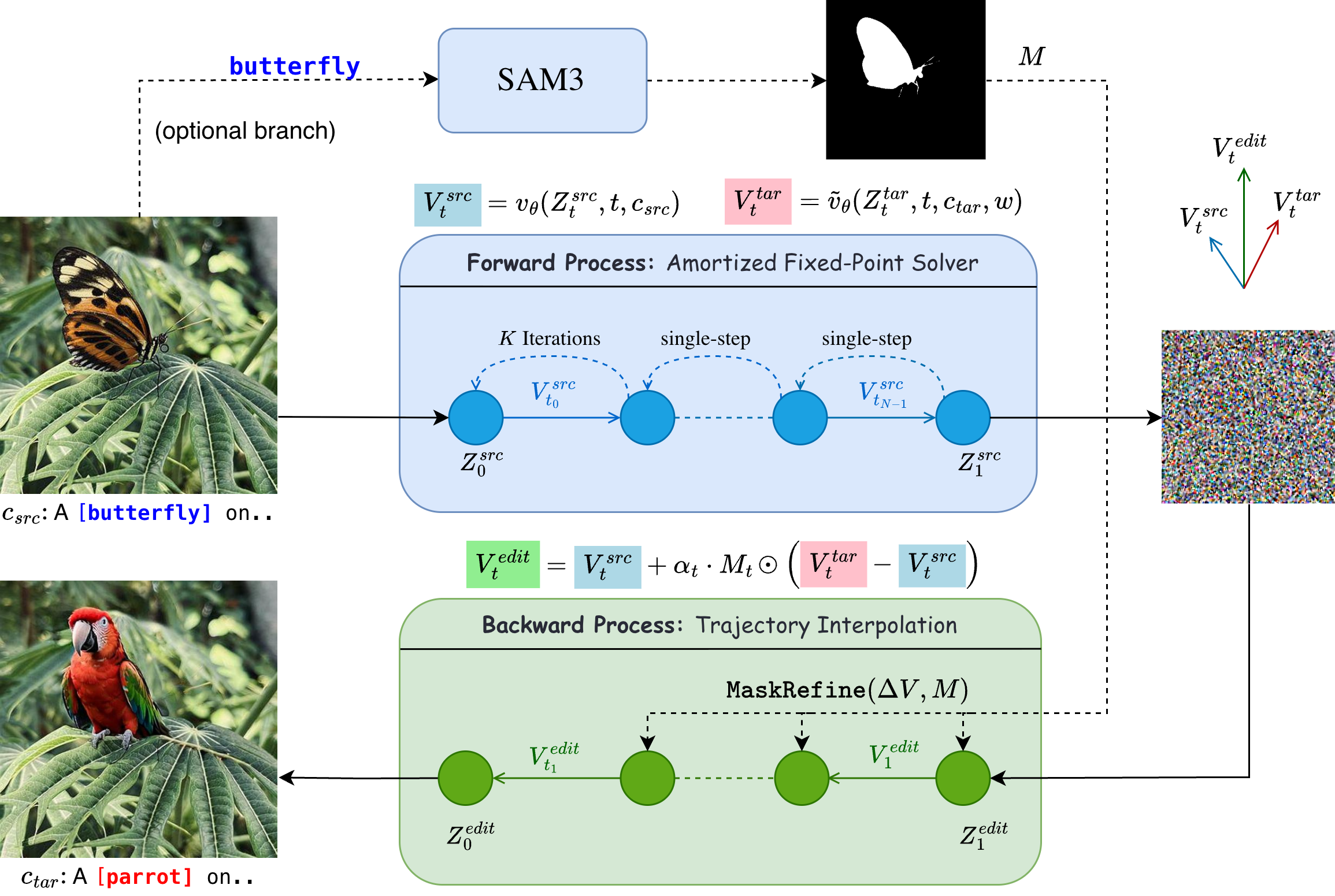}
    \caption{SteerFlow Editing Framework. Note that adaptive masking is optional.}
    \label{fig:steerflow_pipeline}
\end{figure}

\subsection{Overview}
In this section, we present the SteerFlow editing framework (see \cref{fig:steerflow_pipeline} for an overview). For both image reconstruction and editing, we derive theoretical error bounds between the output and the source image, and show that SteerFlow provably achieves tighter bounds than standard inversion-based approaches. 

\subsection{Forward Process: Amortized Fixed-Point Solver}
The primary objective of the forward process is to find an inverted latent that retains source structure so that simulating a backward ODE on this latent under the target prompt does not significantly degrade source consistency. The quality of the inverted latent is often assessed via an image reconstruction task. From \cref{eq:forward}, we solve for $Z_0^{src}$ with the inverted latent $Z_1^{src}$:
\begin{align}
\label{eq:recon}
\hat{Z}_0^{src} &= Z_1^{src} - \int_{0}^{1} v_\theta(Z_t, t, c_{src}) \, dt && \text{(Reconstruction)}
\end{align}
While this continuous integral yields exact reconstruction, it is intractable in practice. The ODE is therefore solved using Euler's method with discretized steps: $Z_{t_{i+1}} = Z_{t_i} + \Delta t \cdot v_\theta(Z_{t_i}, t_i)$, where $\{t_i\}_{i=0}^N$ represents a time schedule with $t_0 = 0$, $t_N = 1$, and $\Delta t = 1/N$. This discretization introduces an asymmetry between the forward inversion and backward reconstruction processes. Specifically, at any step $i$, the forward latent $Z_{t_{i}}$ and backward latent $\hat{Z}_{t_i}$ are computed as:
\begin{align}
\label{eq:discretization}
\text{Forward Latent: } \quad Z_{t_i} &= Z_{t_{i+1}} - \Delta t \cdot v_\theta(Z_{t_i}, t_i) \\
\text{Backward Latent: } \quad \hat{Z}_{t_i} &= \hat{Z}_{t_{i+1}} - \Delta t \cdot v_\theta(\hat{Z}_{t_{i+1}}, t_{i+1})
\end{align}
Perfect reconstruction ($\hat{Z}_{t_i} \equiv Z_{t_i}$) would require the velocity field to be constant over the interval $[t_i, t_{i+1}]$, implying a linear trajectory where $v_\theta(Z_{t_i}, t_i) = v_\theta(Z_{t_{i+1}}, t_{i+1})$. However, due to the high-dimensional curvature of pretrained flow-based models~\cite{liu2022flow}, the velocity changes dynamically along the path, creating a discretization gap $\delta_i = \| v_\theta(Z_{t_i}, t_i) - v_\theta(Z_{t_{i+1}}, t_{i+1}) \|$. This velocity mismatch induces \textit{inversion error} that is compounded as $t \rightarrow 0$. We bound the final inversion error with the following proposition: 

\begin{proposition}[Euler Inversion Error Bound]
\label{prop:inversion_error}
Let $v_\theta(z, t)$ be $L$-Lipschitz continuous in $z$ and have a bounded total time derivative $\|\frac{d v_\theta}{d t}\| \le M$, which represents the \textbf{maximum curvature} (acceleration) of the trajectory. Consider a discretization $\Delta t = 1/N$ and ignore the higher-order error $O(\Delta t^2)$. The Euler inversion error is bounded by:
\begin{equation}
    \mathcal{E}^{inv}_{0} = \| \hat{Z}_{t_0} - Z_{t_0} \| \le \frac{M \Delta t}{L} \left( (1 + L\Delta t)^N - 1 \right) \leq \frac{M \Delta t}{L} (e^L - 1)
\end{equation}
\end{proposition}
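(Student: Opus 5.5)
The plan is to track the per-step error $e_i = \|\hat{Z}_{t_i} - Z_{t_i}\|$ backward from $i=N$ to $i=0$, derive a one-step recursion, and unroll it with a discrete Gr\"onwall argument. The recursion starts at $e_N = 0$, since the backward reconstruction is initialized at the inverted latent, $\hat{Z}_{t_N} = Z_{t_N}$.

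\textbf{Step 1: one-step recursion.} Subtract the forward relation from the backward relation in \cref{eq:discretization}:
\begin{equation*}
\hat{Z}_{t_i} - Z_{t_i} = \hat{Z}_{t_{i+1}} - Z_{t_{i+1}} - \Delta t\bigl(v_\theta(\hat{Z}_{t_{i+1}}, t_{i+1}) - v_\theta(Z_{t_i}, t_i)\bigr).
\end{equation*}
Insert and subtract $v_\theta(Z_{t_{i+1}}, t_{i+1})$ inside the parenthesis, which splits it into two pieces.
\begin{itemize}
\item The term $v_\theta(\hat{Z}_{t_{i+1}}, t_{i+1}) - v_\theta(Z_{t_{i+1}}, t_{i+1})$ has norm at most $L e_{i+1}$ by the Lipschitz assumption.
\item The term $v_\theta(Z_{t_{i+1}}, t_{i+1}) - v_\theta(Z_{t_i}, t_i)$ is exactly the discretization gap, so its norm is $\delta_i$.
\end{itemize}
To bound $\delta_i$, write it as the integral of the total time derivative $\frac{d}{dt}v_\theta(Z_t,t)$ along the forward trajectory over $[t_i, t_{i+1}]$. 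This gives $\delta_i \le M\Delta t$. Identifying the discrete forward iterates with the continuous trajectory only costs a term of order $O(\Delta t^2)$, which the statement tells us to ignore. The result is
\begin{equation*}
e_i \le (1+L\Delta t)\, e_{i+1} + M\Delta t^2 .
\end{equation*}

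\textbf{Step 2: unroll.} Iterating the recursion from $e_N=0$ gives
\begin{equation*}
e_0 \le M\Delta t^2 \sum_{k=0}^{N-1}(1+L\Delta t)^k .
\end{equation*}
The geometric sum equals $\frac{(1+L\Delta t)^N - 1}{L\Delta t}$, so $e_0 \le \frac{M\Delta t}{L}\bigl((1+L\Delta t)^N-1\bigr)$, which is the first inequality. For the second, use $1+x\le e^x$ to get $(1+L\Delta t)^N \le e^{LN\Delta t} = e^L$, since $N\Delta t = 1$.

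\textbf{Main obstacle.} The delicate point is bounding the velocity gap $\delta_i$ by $M\Delta t$. The total derivative bound $M$ is stated along the true ODE trajectory, but $Z_{t_i}$ and $Z_{t_{i+1}}$ are discrete forward iterates. These iterates are defined implicitly: $Z_{t_{i+1}} = Z_{t_i} + \Delta t\, v_\theta(Z_{t_i},t_i)$, written as the explicit forward Euler step.

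I would handle this by interpreting $M$ as bounding $\frac{d}{dt}v_\theta(z(t),t)$ along the piecewise-linear Euler interpolant. On that interpolant, $\frac{d}{dt}v_\theta = \partial_t v_\theta + \nabla_z v_\theta\, v_\theta(Z_{t_i},t_i)$. Its deviation from the true total derivative is controlled by $L$ times an $O(\Delta t)$ drift, which after multiplying by $\Delta t$ is absorbed into the ignored $O(\Delta t^2)$ terms.

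Everything else is routine: the Lipschitz step and the geometric sum.
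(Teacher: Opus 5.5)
Your proposal is correct and follows the paper's proof essentially step for step. It uses the same cross-term $v_\theta(Z_{t_{i+1}},t_{i+1})$, the same recursion $e_i \le (1+L\Delta t)e_{i+1} + M\Delta t^2$ started from $e_N=0$, and the same geometric-sum unrolling; the only differences are cosmetic (you bound $\delta_i$ by integrating the total derivative along the Euler interpolant where the paper uses a first-order Taylor expansion at $(Z_{t_i},t_i)$, and you use $1+x\le e^x$ where the paper invokes the limit definition of $e^L$), and both treatments rest on the same step of dropping $O(\Delta t^2)$ terms.
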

\cref{prop:inversion_error} shows that the inversion error expands at a rate of $(1 + L\Delta t)$ per step, and it depends on the maximum curvature $M$. If $M=0$ (the path is straight), $\mathcal{E}_{N}=0$. While prior higher-order methods reduce this bound to $O(\Delta t^2)$~\cite{wang2024taming,ma2025adamsbashforthmoultonsolver}, they do so at the cost of $\times$2 NFEs. 

In this work, we propose a new approach that reduces inversion error by \textbf{\textit{implicitly} straightening the forward trajectory}, which consequently reduces the forward-backward velocity mismatch. This is achieved by enforcing a consistent velocity across consecutive timesteps. Formally, at any timestep $t_i$, we solve for a $v^*_i$ that satisfies the implicit fixed-point condition: 
\begin{equation}
\label{eq:implicit_condition}
v^*_i = v_\theta(Z_{t_i} + \Delta t \cdot v^*_i, t_{i+1})
\end{equation}
To solve for $v^*_i$, we define an update map $\Phi_i: \mathbb{R}^d \to \mathbb{R}^d$ on the velocity $v_i$ as:
\begin{align}
    \label{eq:update_map}
    v_i^{k+1} &= \Phi_i(v_i^{k}) = v_\theta(Z_{t_i} + \Delta t \cdot v_i^{k},\, t_{i+1}), \\
    \text{with } v_i^0 &= v_\theta(Z_{t_i},\, t_i) \quad \text{(initial condition)}
\end{align}
We note that \cref{eq:update_map} represents a fixed-point iteration, initialized with the standard Euler velocity $v_\theta(Z_{t_i}, t_i)$. As established in \cref{prop:convergence} (App.~\ref{appendix:proofs}), when the step size $\Delta t < \frac{1}{L}$, this update map acts as a strict contraction mapping, meaning that the local velocity mismatch strictly decreases after each iteration. However, we observe that increasing the number of fixed-point iterations $K$ yields diminishing returns, as the reconstruction performance fluctuates rather than improves monotonically with $K$. We attribute this to a \textbf{moving target problem}: the fixed-point solver in \cref{eq:update_map} optimizes $v_i$ toward a target $v_{i+1}$ estimated via a zero-step Euler iteration. If $v_{i+1}$ is subsequently refined using the same procedure, the shift in the target velocity would invalidate the previous step’s optimization, leading to trajectory drift and redundant function evaluations.

\textbf{Amortized Fixed-Point (AFP) Solver.} Since the moving target problem stems from applying uniform iterative refinement across all timesteps, we address this by \textit{amortizing} the refinement cost: we apply $K$ iterations only at the first timestep to strictly minimize the initial velocity mismatch, and a \textbf{single iteration} at all subsequent timesteps, reusing the previous step's velocity as the ``warm-start'' initial guess. Single-step iteration constrains the target movement, while the warm-start ensures temporal consistency across steps. Consequently, the total overhead is only $K-1$ additional NFEs at the first timestep, compared to $N(K-1)$ for the standard fixed-point solver or $\times2$ NFEs for second-order solvers, making AFP significantly more efficient and empirically more stable. We experimentally validate this in our ablation study in App.~\ref{appendix:ablation}, where the AFP solver achieves consistently improved reconstruction across all metrics as $K$ grows, whereas the standard fixed-point solver exhibits high variance and overall worse performance across $K$. \cref{fig:recon_compare} shows an example where increasing $K$ progressively improves reconstruction quality, which is evidenced in the refinement of the girl's dress. The pseudo-code for SteerFlow Forward Inversion is in \cref{alg:steerflow_forward}, which returns the full forward trajectory for Trajectory Interpolation.

\begin{figure}[t]
    \centering
    \includegraphics[width=1.0\textwidth]{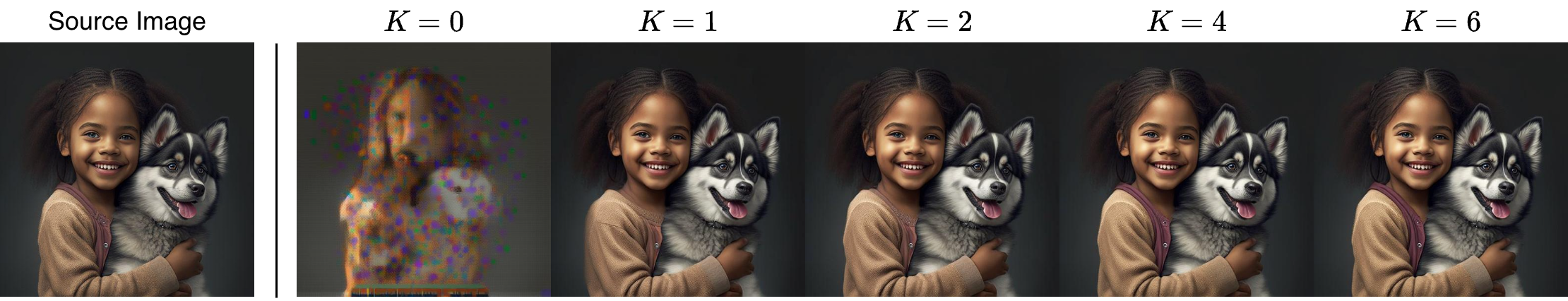}
    \caption{An illustration of the AFP Solver with increasing iterations.}
    \label{fig:recon_compare}
\end{figure}

\subsection{Backward Process: Trajectory Interpolation}
\label{sec:trajectory_interpolation}
With the AFP solver, we have constructed an inverted latent that addresses inversion error by minimizing the forward-backward velocity mismatch. However, this alone does not guarantee source fidelity in the edited output. In line with~\cite{huberman2024edit}, we find that even when starting from a \textbf{perfect latent} which generates the source image, the edited image may still deviate significantly from the source. We attribute this to \textbf{trajectory divergence}, which arises from two factors: the change in textual conditioning and the application of CFG during backward generation. We formalize this as follows:
\begin{proposition}[Backward Editing Error Bound]
\label{prop:euler_edit_error}
With $Z^{tar}_{t_N} = Z^{src}_{t_N}$, the guidance scale $w$ for the backward process, and the same discretization of step size $\Delta t$, the deviation between the source image and the target image can be decomposed into \textbf{editing deviation} and \textbf{CFG deviation}:
\begin{align}
    \label{eq:euler_edit_error}
    \mathcal{E}^{tar}_{0} &= \| Z^{tar}_{t_0} - Z^{src}_{t_0} \| \le \Delta t (||V_{\Delta}|| + (w-1)||V_{CFG}||),   \text{where } \\
  V_{\Delta} &= \sum_{i=1}^{N} v^{tar}_\theta (Z^{tar}_{t_{i}}) - v^{src}_\theta(Z^{src}_{t_{i-1}}); \; V_{CFG} = \sum_{i=1}^N v^{tar}_\theta (Z^{tar}_{t_i}) - v^{\varnothing}_\theta(Z^{tar}_{t_i})
  \label{eq:v_delta}
\end{align}
\end{proposition}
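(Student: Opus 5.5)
The argument is a telescoping comparison of the two discrete trajectories, with no Lipschitz or Grönwall machinery needed, because the bound is stated in terms of the \emph{summed} velocity differences rather than per-step errors. I would write both recursions explicitly. The source trajectory comes from the forward inversion in \cref{eq:discretization}, read backward: $Z^{src}_{t_{i-1}} = Z^{src}_{t_i} - \Delta t\, v^{src}_\theta(Z^{src}_{t_{i-1}})$, since the forward Euler step uses the velocity at the earlier point. The target trajectory comes from the CFG Euler step:
\begin{equation*}
Z^{tar}_{t_{i-1}} = Z^{tar}_{t_i} - \Delta t\, \tilde v_\theta(Z^{tar}_{t_i}), \qquad \tilde v_\theta = v^{\varnothing}_\theta + w\,(v^{tar}_\theta - v^{\varnothing}_\theta).
\end{equation*}

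First, I unroll both recursions from $i=N$ down to $i=1$. Using $Z^{tar}_{t_N}=Z^{src}_{t_N}$, this gives
\begin{equation*}
Z^{tar}_{t_0}-Z^{src}_{t_0} = -\Delta t\sum_{i=1}^N \bigl(\tilde v_\theta(Z^{tar}_{t_i}) - v^{src}_\theta(Z^{src}_{t_{i-1}})\bigr).
\end{equation*}
Second, I rewrite the CFG velocity as $\tilde v_\theta = v^{tar}_\theta + (w-1)(v^{tar}_\theta - v^{\varnothing}_\theta)$, which is a one-line algebraic identity. Substituting it splits the sum exactly into $V_\Delta + (w-1)V_{CFG}$, with both terms as defined in \cref{eq:v_delta}.

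Third, I take norms and apply the triangle inequality. Since $w>1$, the factor $(w-1)$ is nonnegative and comes out of the norm, which gives the stated bound $\Delta t(\|V_\Delta\|+(w-1)\|V_{CFG}\|)$.

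The main obstacle is only bookkeeping: the source path must be indexed by its forward (implicit-looking) Euler step, so that the $i$-th source velocity is evaluated at $Z^{src}_{t_{i-1}}$ while the target velocity is evaluated at $Z^{tar}_{t_i}$. This indexing is what makes the telescoping exact and matches the asymmetric indices in $V_\Delta$. I would check this against \cref{eq:discretization} carefully. If the paper intends the source trajectory to be the backward reconstruction instead, the same telescoping works with $v^{src}_\theta(Z^{src}_{t_i})$ in place of $v^{src}_\theta(Z^{src}_{t_{i-1}})$.
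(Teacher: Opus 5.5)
Your proposal is correct and follows essentially the same route as the paper's proof. The paper also expresses $Z^{src}_{t_0}$ as $Z^{src}_{t_N}$ minus $\Delta t$ times the sum of the cached forward-Euler velocities $v_\theta(Z^{src}_{t_{i-1}}, t_{i-1}, c_{src})$, subtracts the unrolled CFG Euler trajectory, regroups the guided velocity as $v^{tar}_\theta + (w-1)(v^{tar}_\theta - v^{\varnothing}_\theta)$, and finishes with the triangle inequality, with the index offset in $V_\Delta$ arising exactly as you describe.
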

The \textbf{editing deviation} $V_{\Delta}$ captures the cumulative editing direction from the source to the target image, arising from the difference between source and target text embeddings. The \textbf{CFG deviation} $V_{\text{CFG}}$ captures the cumulative effect of classifier-free guidance, which enhances prompt adherence but also amplifies divergence from the source trajectory when $w > 1$.

Prior works~\cite{jiao2026unieditflow,kulikov2025flowedit} attempted to bound \cref{eq:euler_edit_error} using \emph{truncated inversion} that effectively sets the backward velocity to the source-reconstruction velocity $(\tilde{v}^{tar}_\theta = v^{src}_{\theta})$ for some initial timesteps. While this suppresses early error accumulation, it tends to over-constrain editability because too many truncated steps prevent the target latent from detaching from the source trajectory in later timesteps. To overcome this rigidity, we propose \textbf{Trajectory Interpolation} to replace the hard truncation with an \textit{adaptive} blend between the target-conditioned and source-reconstruction velocities. Denote $Z_t^{edit}$ as SteerFlow editing latent, $V_{t_i}^{tar}=\tilde{v}_\theta(Z_{t_i}^{edit}, t_i, c_{tar}, w)$ and $V_{t_i}^{src}=v_\theta(Z_{t_{i-1}}, t_{i-1}, c_{src})$. At time $t$, SteerFlow updates the editing latent $Z^{edit}_t$ with the following editing velocity:
\begin{align}
    \label{eq:steerflow_velocity}
    V^{edit}_t = V_t^{src} + \alpha_t\cdot (V_t^{tar} - V_t^{src}), && \alpha_t \in [0,1]
\end{align}
where $\alpha_t$ continuously interpolates between full target-conditioned editing ($\alpha_t = 1$) and pure source reconstruction ($\alpha_t = 0$). Importantly, we directly reuse the source-conditioned velocity $v_\theta(Z_t^{\text{src}}, t, c_{\text{src}})$ cached during the forward process, avoiding additional NFEs while ensuring the backward trajectory remains anchored to the ground-truth source-reconstruction path. Theoretically, our formulation of trajectory interpolation provides a tunable bound on the editing error:
\begin{proposition}[SteerFlow Editing Error Bound]
\label{prop:steerflow_error}
Let $\mathcal{B}^{tar}_0$ be the upper bound of the error $\mathcal{E}^{tar}_0$ from standard Euler integration ($\alpha=1$). For a fixed $\alpha \in (0,1)$, the upper bound of the SteerFlow editing error $\mathcal{B}^{edit}_0$ satisfies:
\begin{align}
    \label{eq:steerflow_error}
    \mathcal{E}^{edit}_0 \le \mathcal{B}^{edit}_0 < \alpha \cdot \mathcal{B}^{tar}_0
\end{align}
\end{proposition}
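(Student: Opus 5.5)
The plan is to mirror the telescoping argument behind \cref{prop:euler_edit_error}, but with the SteerFlow editing velocity \cref{eq:steerflow_velocity} in place of the pure target velocity. First I would write the discrete backward recursion for the editing latent,
\begin{equation*}
Z^{edit}_{t_{i-1}} = Z^{edit}_{t_i} - \Delta t\, V^{edit}_{t_i}, \qquad Z^{edit}_{t_N} = Z^{src}_{t_N}.
\end{equation*}
Next I would write the corresponding recursion for the source trajectory. By the forward update in \cref{eq:discretization}, the cached forward trajectory satisfies $Z^{src}_{t_{i-1}} = Z^{src}_{t_i} - \Delta t\, v_\theta(Z^{src}_{t_{i-1}}, t_{i-1}, c_{src})$. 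This is exactly $Z^{src}_{t_i} - \Delta t\, V^{src}_{t_i}$ under the indexing chosen for $V^{src}_{t_i}$. This is the key structural point: because SteerFlow reuses the cached forward velocities, the source-anchored velocity reproduces the forward path \emph{exactly}, with no discretization mismatch.

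Subtracting the two recursions and telescoping from $i=N$ down to $i=1$, the common initial condition removes the boundary term and leaves
\begin{equation*}
Z^{edit}_{t_0} - Z^{src}_{t_0} = -\Delta t \sum_{i=1}^N \bigl(V^{edit}_{t_i} - V^{src}_{t_i}\bigr) = -\alpha\,\Delta t \sum_{i=1}^N \bigl(V^{tar}_{t_i} - V^{src}_{t_i}\bigr),
\end{equation*}
using that $\alpha$ is fixed. I would then expand the CFG velocity as $V^{tar}_{t_i} = v^{tar}_\theta(Z^{edit}_{t_i}) + (w-1)\bigl(v^{tar}_\theta - v^{\varnothing}_\theta\bigr)(Z^{edit}_{t_i})$. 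Applying the triangle inequality then gives
\begin{equation*}
\mathcal{E}^{edit}_0 \le \mathcal{B}^{edit}_0 := \alpha\,\Delta t\bigl(\|V^{edit}_\Delta\| + (w-1)\|V^{edit}_{CFG}\|\bigr).
\end{equation*}
This has the same form as the bound in \cref{prop:euler_edit_error}, scaled by $\alpha$, but with the sums $V^{edit}_\Delta$ and $V^{edit}_{CFG}$ evaluated along the editing trajectory instead of the target trajectory.

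The main obstacle is comparing these sums with the $\alpha=1$ quantities $V_\Delta$ and $V_{CFG}$ that define $\mathcal{B}^{tar}_0$, since they are evaluated at different latents. I would first try a per-step comparison via the $L$-Lipschitz assumption of \cref{prop:inversion_error}. The deviation $\|Z^{edit}_{t_i} - Z^{src}_{t_i}\|$ equals $\alpha$ times the accumulated partial sum, which is strictly smaller than the corresponding deviation of the pure-target trajectory whenever $\alpha<1$. Consequently each summand along the editing path is controlled by its counterpart along the target path. If this comparison only yields $\le$ up to an $O(\Delta t)$ Lipschitz term, I would fall back on the paper's convention: treat $\mathcal{B}^{tar}_0$ as the worst-case (supremum) velocity-gap bound over trajectories, which is the same for both paths. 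The inequality $\mathcal{B}^{edit}_0 \le \alpha\mathcal{B}^{tar}_0$ then follows immediately.

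Strictness would come from the fact that for $\alpha\in(0,1)$ the editing path lies strictly closer to the source path than the pure-target path does. This makes the velocity gaps, and hence the bound, strictly smaller, provided $V^{tar}\neq V^{src}$ at some step. If that case fails, both sides vanish, and I would note this degenerate case separately.
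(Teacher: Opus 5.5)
Your telescoping identity is correct. It is an exact version of the paper's first step: since the anchor velocities are the cached forward increments, $Z^{edit}_{t_0}-Z^{src}_{t_0}=-\alpha\,\Delta t\sum_{i=1}^{N}(V^{tar}_{t_i}-V^{src}_{t_i})$. The gap is in the comparison step. Your $\mathcal{B}^{edit}_0$ and the bound of \cref{prop:euler_edit_error} are built from velocity gaps \emph{realized} along two different trajectories, and nothing orders them. Lipschitz continuity only gives upper bounds such as $\|\tilde v_\theta(Z^{edit}_{t_i})-\tilde v_\theta(Z^{src}_{t_i})\|\le L\|Z^{edit}_{t_i}-Z^{src}_{t_i}\|$. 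It never compares the actual gap on the editing path with the actual gap on the pure-target path. Your claim that the editing path stays closer to the source already presupposes that comparison, so the argument is circular. Moreover, $\|V_\Delta\|$ is the norm of a sum, so even per-step domination would not transfer, because the cancellations differ. Concretely, with your definitions the strict inequality is false. For $N=1$ both trajectories evaluate the target velocity at the same point $Z^{src}_{t_N}$, so $\mathcal{B}^{edit}_0=\alpha\,\mathcal{B}^{tar}_0$ exactly, even though $V^{tar}\neq V^{src}$. Your fallback of a common worst-case gap $G$ gives $\mathcal{B}^{edit}_0=\alpha G=\alpha\,\mathcal{B}^{tar}_0$, again only equality. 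Any bound that is linear in the velocity gaps scales by exactly $\alpha$, so no strictness can come out of it.

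The missing idea is to split the gap into a path-independent part plus an error-feedback part. The paper fixes $\delta_{\max}=\max_t\|\tilde v_\theta(Z^{src}_t,c_{tar})-v_\theta(Z^{src}_t,c_{src})\|$, measured along the \emph{source} trajectory only. It then uses the cross term $\tilde v_\theta(Z^{src},c_{tar})$ with the Lipschitz assumption to get $\|V^{tar}_{t_i}-V^{src}_{t_i}\|\le L\,\mathcal{E}_{t_{i+1}}+\delta_{\max}$. This yields the Gr\"onwall-type recurrence $\mathcal{E}_{t_i}\le(1+\alpha L\Delta t)\,\mathcal{E}_{t_{i+1}}+\alpha\Delta t\,\delta_{\max}$. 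Solving it gives the one-parameter family of bounds $\mathcal{B}(\alpha)=\frac{\delta_{\max}}{L}(e^{\alpha L}-1)$, with $\mathcal{B}^{tar}_0:=\mathcal{B}(1)$ taken from the \emph{same} family rather than from \cref{prop:euler_edit_error}. Because $\alpha$ damps both the direct gap and its compounding through the feedback term, it ends up in the exponent. Strict convexity of $x\mapsto e^x-1$, which vanishes at $0$, then gives $e^{\alpha L}-1<\alpha(e^{L}-1)$, which is exactly the strict inequality. Without the feedback term nothing is nonlinear in $\alpha$, and hence there is no source of strictness.
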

However, a constant $\alpha$ is not ideal since it uniformly dampens the editing signal. We therefore design an \textit{adaptive}, \textit{time-decaying} schedule $\alpha_t$ that anchors the editing latent to the source trajectory in early generation steps while progressively releasing it to the target trajectory later:
\begin{align}
    \label{eq:beta_scheduler}
    \alpha_t = \text{CosSim}(V^{src}_t, V^{tar}_t)\cdot (1-t^\gamma) && t \in [0,1]
\end{align}
where $\gamma$ controls the temporal decay rate. The coefficient $\text{CosSim}(V^{src}_t, V^{tar}_t)$ introduces directional awareness. When $V^{tar}_t$ strongly aligns with $V^{src}_t$, the editing step is structurally safe, allowing a higher weight on the target velocity. When the two vectors diverge, stronger source regularization is applied to prevent structural collapse. Since the editing velocities operate in latent space with semantic information densely encoded along the channel dimension, we compute a \textbf{spatial cosine similarity} by evaluating the cosine similarity between channel vectors at each spatial location and averaging across the spatial grid. The pseudo-code for the SteerFlow Backward Editing process is shown in App.\cref{alg:steerflow_backward}.

Crucially, the AFP Solver and Trajectory Interpolation are naturally complementary. The divergence between forward and backward velocities is offset by one timestep (see $V_\Delta$ in \cref{eq:v_delta}). By enforcing velocity consistency between consecutive timesteps, the AFP Solver reduces this divergence and ensures that the cached source velocities $V_t^{src}$ and target velocities $V_t^{tar}$ operate at the same diffusion level during interpolation.

\subsection{Adaptive Masking Mechanism}
\label{sec:adaptive_mask}
\begin{figure}[t]
    \centering
    \includegraphics[width=1\linewidth]{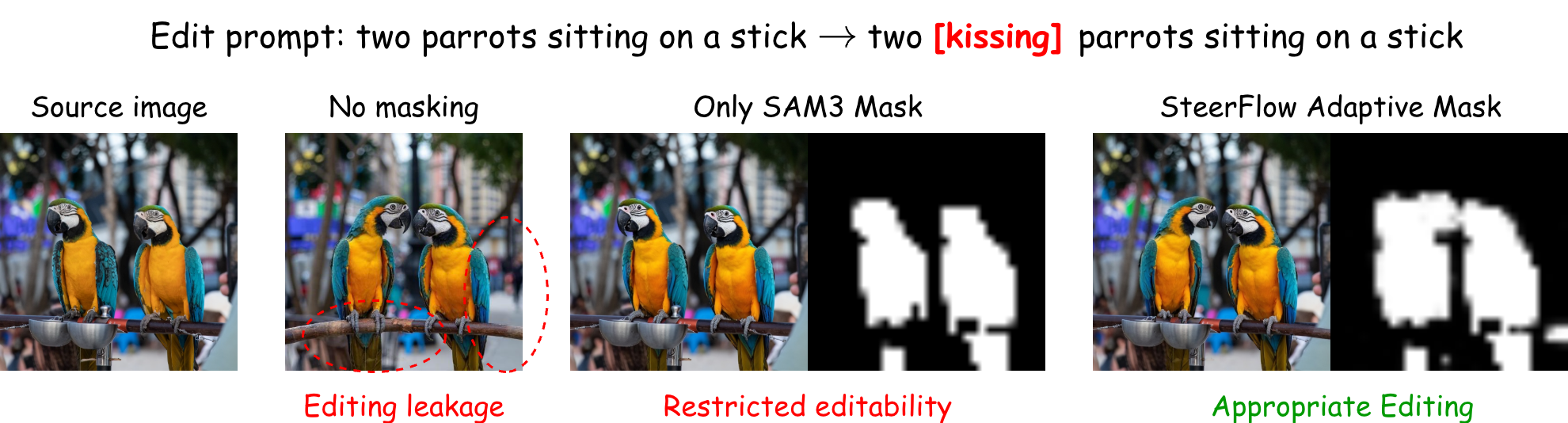}
    \caption{Comparison of masking strategies. No masking causes editing leakage, while using only the SAM3 mask restricts editability. SteerFlow Adaptive Mask dynamically expands the SAM3 mask to accommodate appropriate structural changes.}
    \label{fig:adaptive_mask}
\end{figure}

While the AFP Solver and Trajectory Interpolation altogether improve global source structural fidelity, we observe that SteerFlow still suffers from \textit{editing leakage}, where unintended regions are modified alongside the target. We attribute this primarily to the \textbf{limited concept-region localization} inherent in pretrained generative models. Because these models struggle to precisely decouple semantic concepts from their surrounding context, editing signals driven by prompt differences often bleed into the background. To address this localized failure, SteerFlow introduces an \textit{automated} masking mechanism that enforces the spatial constraints the base model lacks. We leverage SAM3~\cite{carion2025sam3segmentconcepts}, a \textit{concept-guided} segmentation model that accepts source editing tokens (\eg, \texttt{[cat]}) as input and outputs a spatially precise \textit{base} mask for the edited region. However, relying solely on the base mask is restrictive since many editing tasks require structural change beyond mask boundaries. To address this, we propose an \textit{adaptive} mask refinement procedure that expands the base mask with velocity difference $\Delta V = V^{tar} - V^{src}$. This difference captures the expected \textit{editing deviation} in image space induced by the generative model~\cite{kulikov2025flowedit}, making it suitable for mask refinement. In each backward step, the refinement synthesizes a velocity-driven mask from $\Delta V$, merges it with the SAM3 base mask, and applies morphological closing to enforce spatial contiguity. Full details are provided in \cref{alg:mask_refinement}. Importantly, the adaptive mask $M_t$ integrates seamlessly into Trajectory Interpolation (\cref{eq:steerflow_velocity}), yielding an editing velocity that is both temporally and spatially adaptive:
\begin{align}
    \label{eq:steerflow_masked_velocity}
    V^{edit}_t = V_t^{src} + \alpha_t\cdot M_t \odot (V_t^{tar} - V_t^{src}), && \alpha_t \in [0,1], M_t \in [0,1]^d
\end{align}
where $\alpha_t$ governs the global temporal strength of the edit and $M_t$ spatially gates the editing velocity so that background regions outside the mask are preserved.

\begin{figure}[t]
    \centering
    \includegraphics[width=1\textwidth]{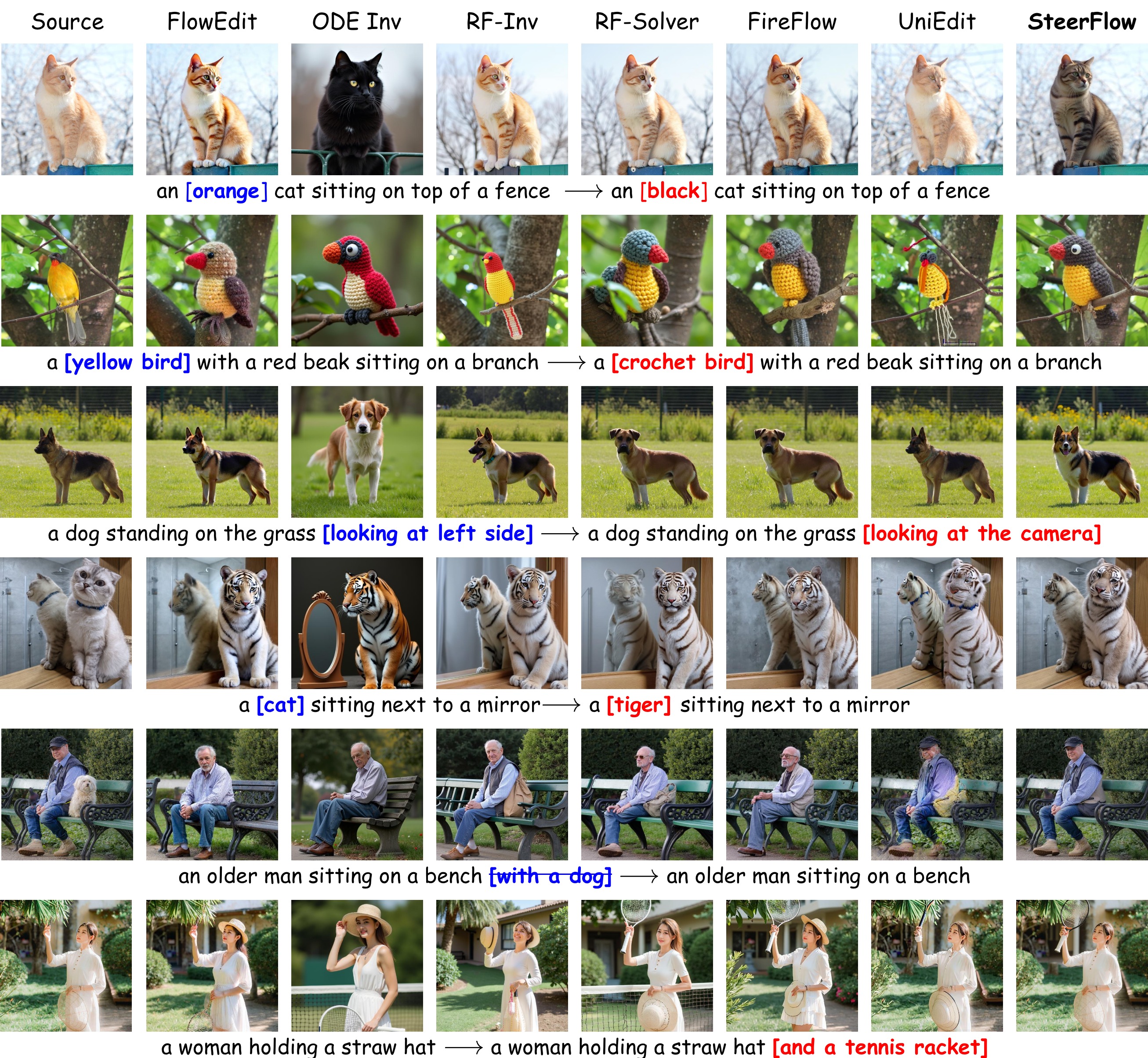}
    \caption{Qualitative comparison of editing baselines and SteerFlow for various editing tasks on FLUX.1-dev model.}
    \label{fig:visualization}
\end{figure}
\section{Experiments}
\subsection{Implementation Details}
\noindent\emph{Models and Dataset.} We evaluate our method on FLUX.1-dev~\cite{flux1dev} and 
Stable Diffusion 3.5 Medium~\cite{sd35medium}, two state-of-the-art T2I flow-based models. 
Following prior works~\cite{jiao2026unieditflow,deng2024fireflowfastinversionrectified}, we 
conduct experiments on two tasks: Image Editing (main task) and Image Reconstruction 
(secondary task to validate the AFP Solver). Both tasks are evaluated on 
PIE-Bench~\cite{ju2023direct}, a standard benchmark comprising 700 image-text pairs across 
10 editing categories. PIE-Bench also provides object masks that we use to evaluate 
background preservation.

\noindent\emph{Evaluation Metrics.} For Image Editing, we follow \cite{zhu2025kv} and assess 
performance across three dimensions. \textbf{Image Quality} is measured by 
\textbf{HPSv2}~\cite{hpsv2} and \textbf{Aesthetic Score (AS)}~\cite{aesthetic_score}, which 
quantify perceptual fidelity and visual quality. \textbf{Source Preservation} is evaluated 
via \textbf{DinoDist}~\cite{dinodist} for global structural consistency, and LPIPS, SSIM, 
and PSNR computed on unedited background regions for background preservation. 
\textbf{Target Alignment} is measured by \textbf{CLIP Score}~\cite{clip_score} for semantic 
adherence to the target prompt, and \textbf{ImageReward}~\cite{xu2023imagereward} for human 
preference alignment. For Image Reconstruction, we follow 
\cite{jiao2026unieditflow,deng2024fireflowfastinversionrectified} and measure \textbf{MSE}, 
\textbf{PSNR}~\cite{psnr}, \textbf{SSIM}~\cite{ssim}, and \textbf{LPIPS}~\cite{lpips} to 
quantify fidelity between the source and reconstructed images. To summarize overall 
performance across methods, we report the average rank (\textbf{Rank}) computed over all 
metrics.

\noindent\emph{Baselines.} We evaluate SteerFlow against two categories of baselines: (1) \textit{Inversion-Based}: \textbf{ODE Inv} (naive ODE inversion), RF-Inversion~\cite{rout2024rfinversion}, FireFlow~\cite{deng2024fireflowfastinversionrectified}, RF-Solver~\cite{wang2024taming}, and UniEdit~\cite{jiao2026unieditflow}; (2) \textit{Inversion-Free}: FlowEdit~\cite{kulikov2025flowedit} and FlowAlign~\cite{kim2025flowalign}. For Image Editing, we retain each method's originally reported configurations, as different methods exhibit varying sensitivities to hyperparameters such as guidance scale and truncation timestep. For Image Reconstruction, we standardize the number of integration steps across all methods, fixing $N=15$ for FLUX and $N=30$ for SD3.5. All baselines are evaluated using their official implementations where available; as not all methods provide open-source support for both models, some comparisons are limited to a single architecture.

\subsection{Experiment Results}
\begin{table}[t]
\centering
\caption{Quantitative comparison on PIE-Bench using FLUX.1-dev and SD3.5-Medium. \textbf{Bold} indicates best, \underline{underline} second best. DinoDist, LPIPS, SSIM, and HPSv2 are scaled $\times10^2$ for readability. For SD3.5, UniEdit uses $N=30$ steps (default is 15) to be comparable to other methods. SteerFlow uses $K=1$ optimization steps.}
\label{tab:editing_results}
\resizebox{\textwidth}{!}{
\begin{tabular}{l cc cccc c cc c cc}
\toprule
\multirow{2}{*}{\textbf{Method}} & \multirow{2}{*}{\textbf{NFE} $\downarrow$} & \multirow{2}{*}{\textbf{Rank} $\downarrow$} & \multicolumn{4}{c}{\textbf{Source Preservation}} & & \multicolumn{2}{c}{\textbf{Target Alignment}} & & \multicolumn{2}{c}{\textbf{Image Quality}} \\
\cmidrule{4-7} \cmidrule{9-10} \cmidrule{12-13}
& & & DinoDist $\downarrow$ & LPIPS $\downarrow$ & SSIM $\uparrow$ & PSNR $\uparrow$ & & CLIP $\uparrow$ & ImgRew $\uparrow$ & & HPSv2 $\uparrow$ & AS $\uparrow$ \\
\midrule
\multicolumn{13}{c}{\textbf{FLUX.1-dev}} \\
\midrule
\multicolumn{13}{l}{\textit{Inversion-Free Methods}} \\
FlowEdit~\cite{kulikov2025flowedit} & 48 & 3.62 & 2.61 & 10.50 & 84.04 & 22.37 & & 25.27 & 0.654 & & \underline{28.78} & \underline{6.74} \\
\midrule
\multicolumn{13}{l}{\textit{Inversion-Based Methods}} \\
ODE Inv & 30 & 4.50 & 10.38 & 30.64 & 64.98 & 14.52 & & \textbf{26.67} & \textbf{1.275} & & \textbf{31.25} & \textbf{6.98} \\
RF-Inversion~\cite{rout2024rfinversion} & 58 & 6.62 & 4.10 & 17.43 & 70.84 & 20.54 & & 24.62 & 0.399 & & 27.85 & 6.74 \\
FireFlow~\cite{deng2024fireflowfastinversionrectified} & 32 & 5.00 & 2.77 & 12.63 & 82.73 & 23.29 & & 25.13 & 0.647 & & 28.09 & 6.67 \\
RF-Solver~\cite{wang2024taming} & 60 & 5.00 & 3.07 & 13.66 & 81.62 & 22.82 & & 25.20 & 0.665 & & 28.14 & 6.65 \\
UniEdit~\cite{jiao2026unieditflow} & 28 & 4.62 & \textbf{0.98} & \underline{5.65} & \underline{90.92} & \textbf{29.62} & & 25.08 & 0.374 & & 26.05 & 6.28 \\
\rowcolor{gray!10} 
\textbf{SteerFlow (w/o mask)} & 31 & \underline{3.38} & 2.41 & 10.80 & 84.05 & 23.00 & & \underline{25.44} & \underline{0.682} & & 28.57 & 6.60 \\
\rowcolor{gray!10} 
\textbf{SteerFlow} & 31 & \textbf{3.25} & \underline{1.61} & \textbf{4.66} & \textbf{91.38} & \underline{27.66} & & 25.31 & 0.651 & & 28.13 & 6.59 \\
\midrule
\multicolumn{13}{c}{\textbf{SD3.5-Medium}} \\
\midrule
\multicolumn{13}{l}{\textit{Inversion-Free Methods}} \\
FlowEdit~\cite{kulikov2025flowedit} & 66& 3.75 & 3.97 & 10.11 & 82.59 & 22.10 & & 26.65 & 1.088 & & 28.11 & \underline{6.71} \\
FlowAlign~\cite{kim2025flowalign} & 66& 4.38 & 3.06 & \underline{6.44} & \underline{86.83} & 24.69 & & 25.61 & 0.722 & & 24.97 & 6.27 \\
\midrule
\multicolumn{13}{l}{\textit{Inversion-Based Methods}} \\
ODE Inv & 60& 3.50 & 12.42 & 33.21 & 63.11 & 13.13 & & \textbf{27.60} & \textbf{1.322} & & \textbf{29.29} & \textbf{6.73} \\
UniEdit~\cite{jiao2026unieditflow} & 55& 3.75 & \underline{2.12} & 9.54 & 86.47 & \underline{25.41} & & 25.97 & 0.869 & & 27.19 & 6.40 \\
\rowcolor{gray!10} 
\textbf{SteerFlow (w/o mask)} & 61& \underline{3.12} & 2.74 & 11.76 & 83.84 & 22.86 & & \underline{26.86} & \underline{1.095} & & \underline{28.78} & 6.54 \\
\rowcolor{gray!10} 
\textbf{SteerFlow} & 61& \textbf{2.50} & \textbf{1.82} & \textbf{5.09} & \textbf{90.38} & \textbf{26.92} & & 26.35 & 0.976 & & 28.03 & 6.52 \\
\bottomrule
\end{tabular}
}
\end{table}
\noindent\emph{Image Editing.} In \cref{tab:editing_results}, we report results for both SteerFlow and SteerFlow (w/o mask) to isolate the contribution of adaptive masking and to promote fair comparison with prior methods that do not incorporate masking. Even without adaptive masking, SteerFlow still achieves the best overall rankings across both models, balancing source preservation and target alignment better than all evaluated methods. Naive ODE inversion achieves the highest target alignment and image quality scores but the worst source preservation, which is expected since it applies no source preservation constraints. UniEdit achieves competitive source preservation but at the cost of constrained editability, owing to its reliance on truncated inversion with a high truncation ratio. On FLUX.1-dev, second-order methods such as RF-Solver and FireFlow offer better editability but exhibit weaker source consistency. The ablation between SteerFlow and SteerFlow (w/o mask) demonstrates that adaptive masking substantially improves source preservation with only a minor reduction in target alignment. These quantitative results are strongly reflected through qualitative visualizations in \cref{fig:visualization}.

\begin{table}[t]
\centering
\caption{Quantitative comparison on image reconstruction. \textbf{Bold} indicates the best performance, and \underline{underline} indicates the second best. LPIPS, SSIM, and MSE are scaled $\times10^2$ for readability. SteerFlow uses $K=8$ optimization steps.}
\label{tab:reconstruction_combined}
\resizebox{\textwidth}{!}{
\begin{tabular}{l l c cccc c cccc}
\toprule
\multirow{2}{*}{\textbf{Model}} & \multirow{2}{*}{\textbf{Method}} & \multirow{2}{*}{\textbf{Rank} $\downarrow$} & \multicolumn{4}{c}{\textbf{Conditional Reconstruction}} & & \multicolumn{4}{c}{\textbf{Unconditional Reconstruction}} \\
\cmidrule{4-7} \cmidrule{9-12}
& & & PSNR $\uparrow$ & LPIPS $\downarrow$ & SSIM $\uparrow$ & MSE $\downarrow$ & & PSNR $\uparrow$ & LPIPS $\downarrow$ & SSIM $\uparrow$ & MSE $\downarrow$ \\
\midrule
\multirow{5}{*}{\shortstack[c]{FLUX.1-dev \\ ($N=15$)}} 
& ODE Inv & 5.00 & 15.32 & 41.11 & 55.97 & 3.52 & & 15.65 & 48.04 & 48.11 & 3.08 \\
& RF-Solver~\cite{wang2024taming} & 3.38 & 19.32 & 20.80 & 73.34 & \underline{2.13} & & 19.39 & 21.47 & 72.01 & \underline{1.80} \\
& FireFlow~\cite{deng2024fireflowfastinversionrectified} & \textbf{1.62} & \textbf{20.45} & \underline{17.93} & 75.83 & \textbf{1.67} & & \textbf{20.26} & \underline{19.65} & \underline{74.12} & \textbf{1.58} \\
& UniEdit~\cite{jiao2026unieditflow} & 3.25 & 19.51 & 18.66 & \underline{76.12} & 3.27 & & 18.98 & 21.21 & 73.12 & 2.06 \\
\rowcolor{gray!10} 
& \textbf{SteerFlow} & \underline{1.75} & \underline{20.26} & \textbf{15.99} & \textbf{78.83} & 2.60 & & \underline{19.63} & \textbf{17.70} & \textbf{76.84} & 1.91 \\
\midrule
\multirow{5}{*}{\shortstack[c]{SD3.5-Medium \\ ($N=30$)}} 
& ODE Inv & 5.00 & 15.80 & 51.38 & 47.66 & 3.25 & & 13.60 & 59.48 & 41.08 & 4.97 \\
& RF-Solver~\cite{wang2024taming} & 4.00 & 18.06 & 41.99 & 53.88 & 2.04 & & 15.51 & 48.68 & 46.96 & 3.27 \\
& FireFlow~\cite{deng2024fireflowfastinversionrectified} & 3.00 & 23.48 & 16.84 & 77.03 & 0.86 & & 17.53 & 30.16 & 64.22 & 2.27 \\
& UniEdit~\cite{jiao2026unieditflow} & \textbf{1.25} & \textbf{24.28} & \textbf{10.09} & \textbf{82.35} & \textbf{0.56} & & \textbf{20.29} & \underline{16.69} & \textbf{73.81} & \textbf{1.27} \\
\rowcolor{gray!10} 
& \textbf{SteerFlow} & \underline{1.75} & \underline{24.25} & \underline{10.18} & \underline{82.27} & \underline{0.58} & & \underline{20.24} & \textbf{16.67} & \textbf{73.81} & \underline{1.28} \\
\bottomrule
\end{tabular}
}
\end{table}

\noindent\emph{Image Reconstruction.} As shown in \cref{tab:reconstruction_combined}, method rankings vary considerably across architectures. For instance, FireFlow ranks first on FLUX but drops to third on SD3.5, while UniEdit ranks first on SD3.5 but fourth on FLUX. In contrast, SteerFlow consistently ranks second across both models, demonstrating strong cross-architecture transferability. Notably, on FLUX, SteerFlow achieves the best perceptual quality, ranking first in LPIPS and SSIM for both conditional and unconditional reconstruction. On SD3.5, SteerFlow closely matches the top-performing UniEdit across all metrics, while outperforming all other baselines by a significant margin. Qualitative results are shown in the appendix.

\section{Extension to Multi-Turn Image Editing}
Multi-turn image editing involves applying a sequence of edits such that each subsequent outcome remains consistent with prior turns. Existing inversion-based methods struggle in this setting, as accumulated differences in textual embeddings across turns cause significant drift from the source image. In contrast, SteerFlow naturally extends to this paradigm by updating the source-anchoring trajectory with the editing trajectory from the previous turn. This requires only $N$ additional NFEs per subsequent instruction while guaranteeing that each new output does not deviate significantly from the previous one. We find that progressively decreasing $\gamma$ in the edit coefficient (\cref{eq:beta_scheduler}) after each editing turn effectively constrains accumulated drift. In turns where new objects are edited (last row of \cref{fig:multi_turn}), the SAM3 base mask should be updated with the corresponding object concept. Some qualitative results are shown in \cref{fig:multi_turn}.
\begin{figure}[t]
    \centering
    \includegraphics[width=1\textwidth]{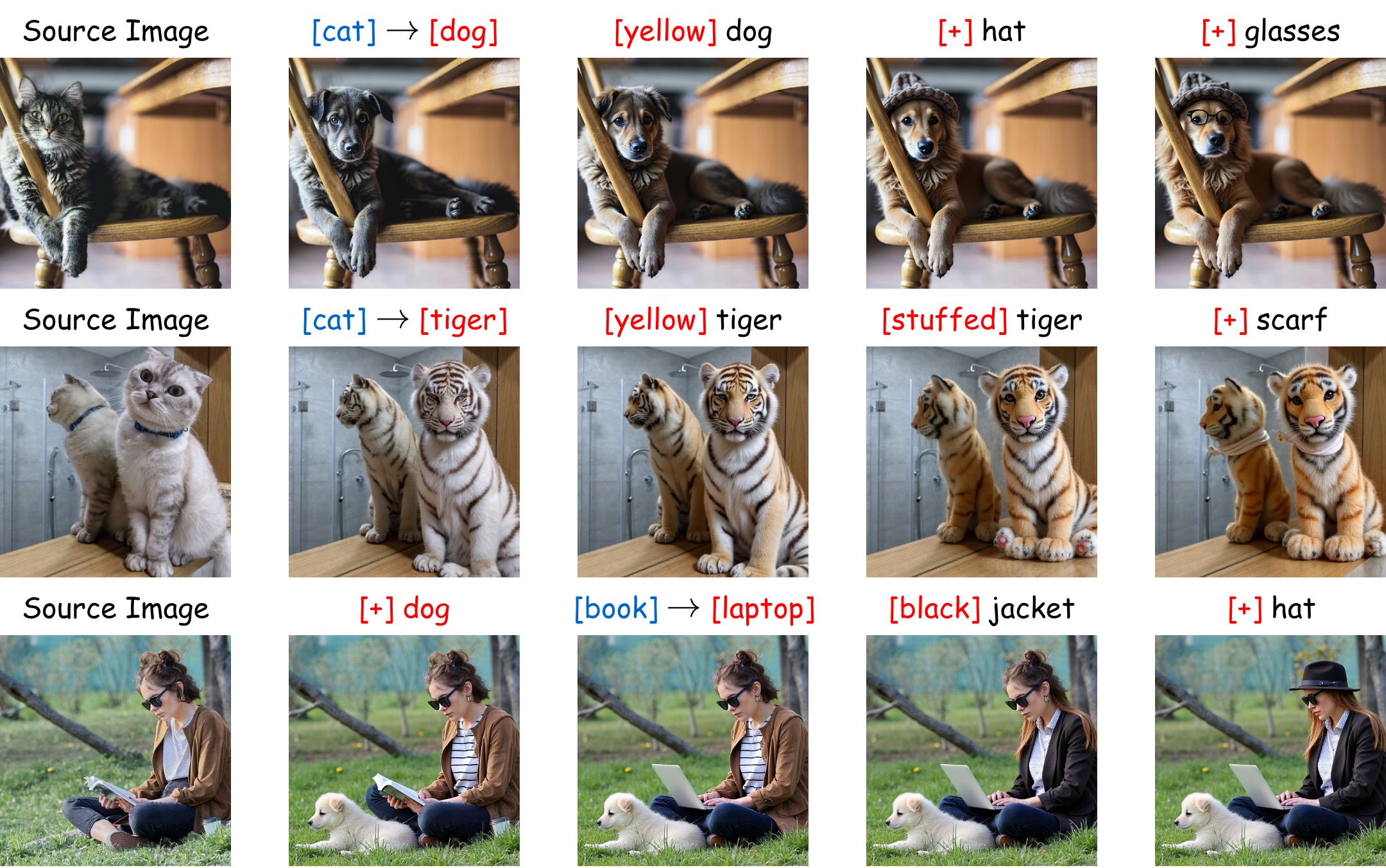}
    \caption{Multi-turn image editing with SteerFlow. Each turn applies one of three edit operations: object replacement, attribute modification, object addition.}
    \label{fig:multi_turn}
\end{figure}
\section{Conclusion}

This paper presented SteerFlow, a principled inversion-based image editing framework with theoretical guarantees on source fidelity.  Grounded in our analysis of reconstruction and 
editing error bounds, SteerFlow introduces an Amortized Fixed-Point Solver that implicitly straightens the forward trajectory to minimize inversion error, and Trajectory Interpolation that adaptively anchors editing trajectories to the source to minimize trajectory divergence. SteerFlow further improves editing localization with Adaptive Masking, which iteratively refines a base SAM3 mask using source-target velocity differences. Extensive experiments on FLUX.1-dev and SD3.5-Medium demonstrate that SteerFlow consistently achieves the best trade-off between source preservation and target alignment, outperforming state-of-the-art inversion-based and inversion-free methods. 




%
%
\bibliographystyle{splncs04}
\bibliography{main}
\newpage
\appendix
\title{Appendices for ``SteerFlow: Steering Rectified Flows for Faithful Inversion-Based Image Editing''} 
\maketitle
The appendix is organized as follows:
\begin{itemize}
    \item \cref{appendix:addition_qualitative} presents additional qualitative results for image editing, multi-turn image editing, and image reconstruction.
    \item \cref{appendix:steerflow_algorithm} presents the detailed SteerFlow editing algorithm, its hyperparameter settings, and its extension to multi-turn image editing.
    \item \cref{appendix:proofs} presents the theoretical proofs.
    \item \cref{appendix:ablation} presents the ablation study.
    \item \cref{appendix:implementation_details} presents the implementation details and hyperparameter settings for the baselines and SteerFlow.
    \item \cref{appendix:limitations} discusses the limitations of SteerFlow.
    \item \cref{appendix:perfect_latent_study} presents a study on image editing with a perfect latent.
\end{itemize}

\newpage
\section{Additional Qualitative Results}
\label{appendix:addition_qualitative}
\begin{figure}[!ht]
    \centering
    \includegraphics[width=\linewidth]{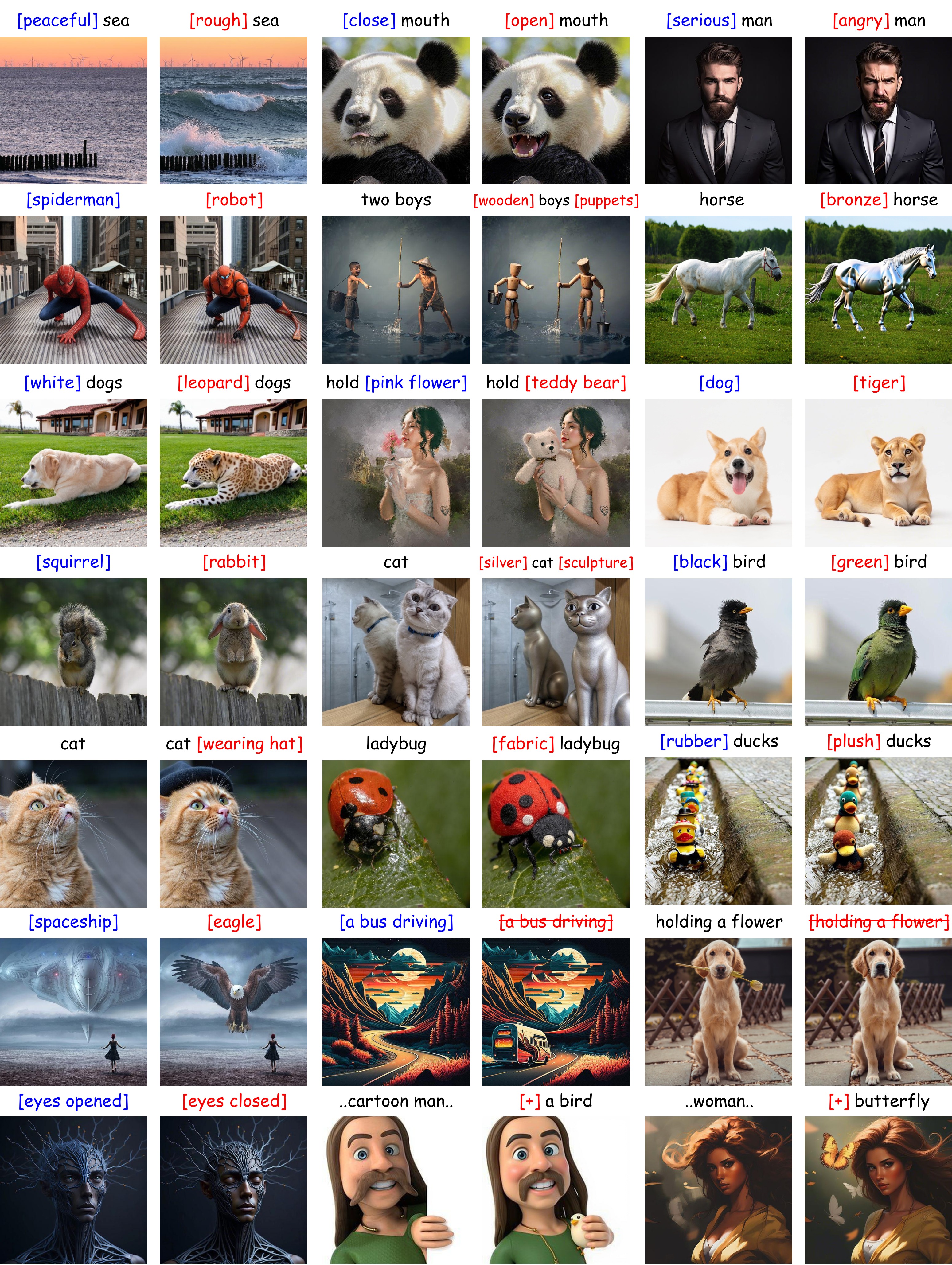}
    \caption{Additional SteerFlow results on FLUX.1-dev model.}
    \label{fig:flux_addi_vis}
\end{figure}

\begin{figure}
    \centering
    \includegraphics[width=1\linewidth]{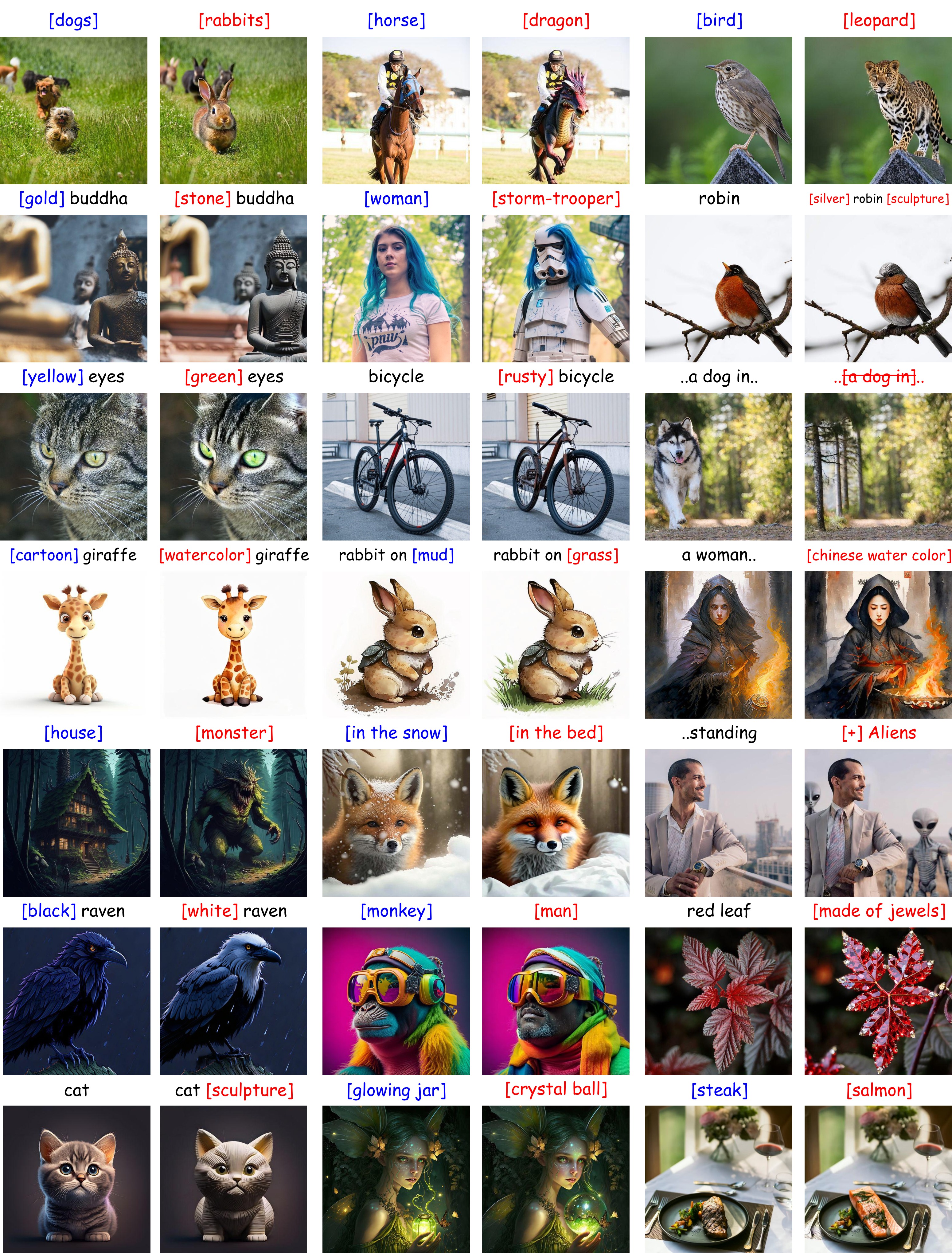}
    \caption{Additional SteerFlow results on Stable Diffusion 3.5-Medium model.}
    \label{fig:sd3_addi_vis}
\end{figure}

\begin{figure}
    \centering
    \includegraphics[width=\linewidth]{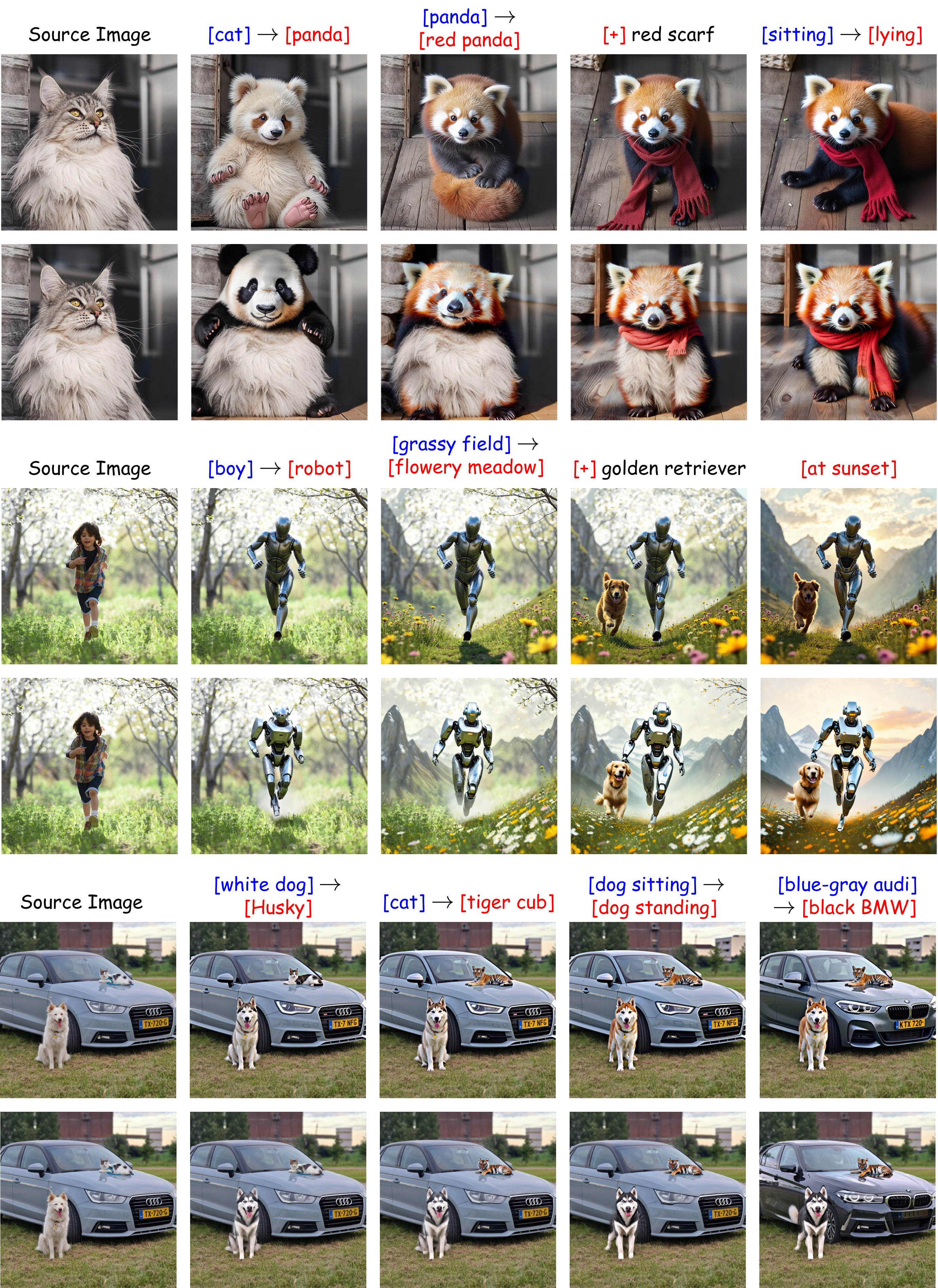}
    \caption{Additional SteerFlow results on Multi-Turn editing. First row is Flux.1-dev, second row is Stable Diffusion 3.5-Medium. For both models, SteerFlow enables consistent multi-turn editing and complex multi-object editing (in the last row).}
    \label{fig:addi_multi_turn}
\end{figure}

\begin{figure}
    \centering
    \includegraphics[width=\linewidth]{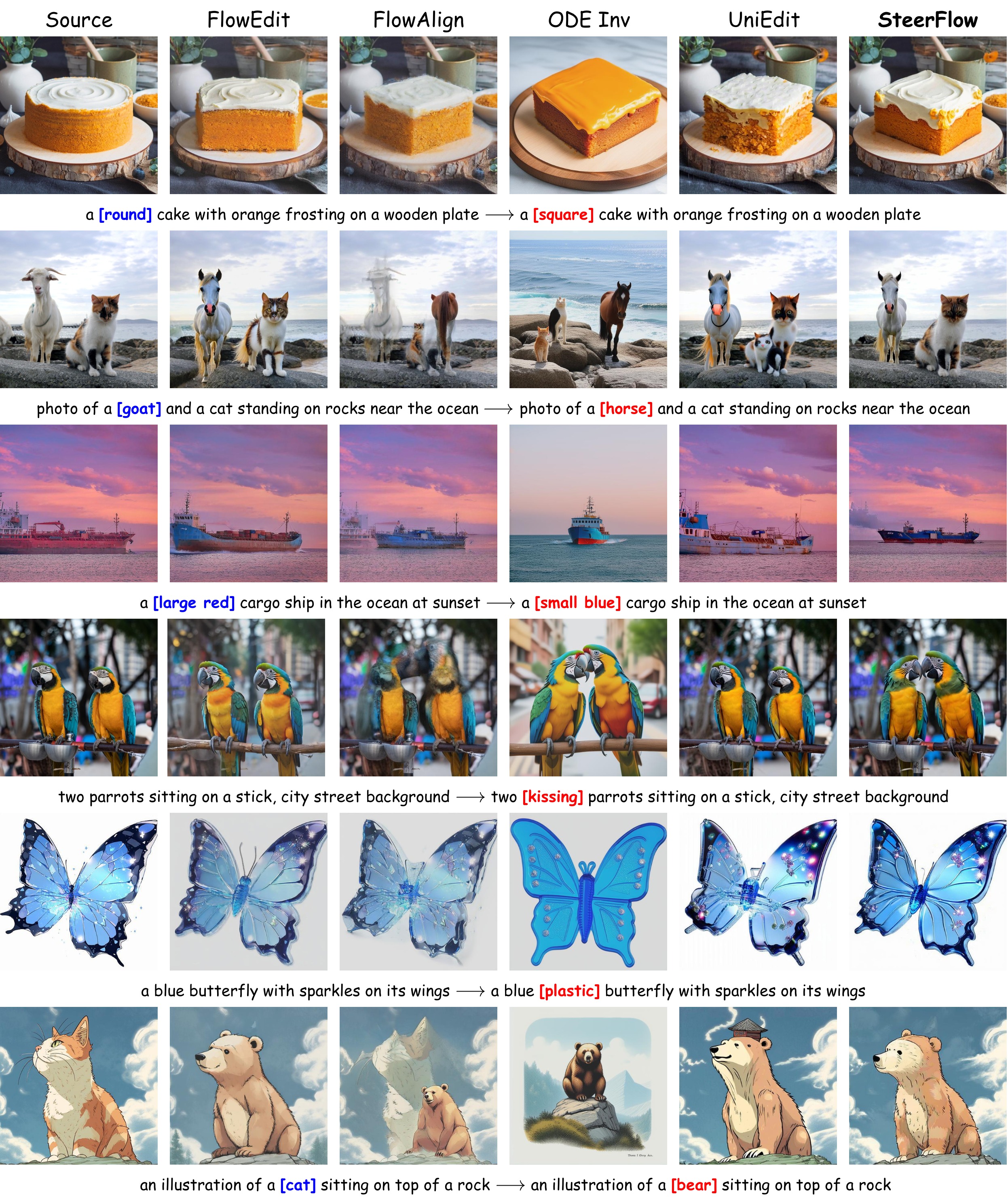}
    \caption{Qualitative comparison of baseline editing methods and SteerFlow on Stable Diffusion 3.5-Medium. SteerFlow demonstrates strong source preservation and faithful target prompt alignment. Previous editing baselines struggle with pose and structural change (in the third and fourth rows). ODE Inversion suffers from over-editing.}
    \label{fig:sd3_comparison}
\end{figure}

\begin{figure}
    \centering
    \includegraphics[width=\linewidth]{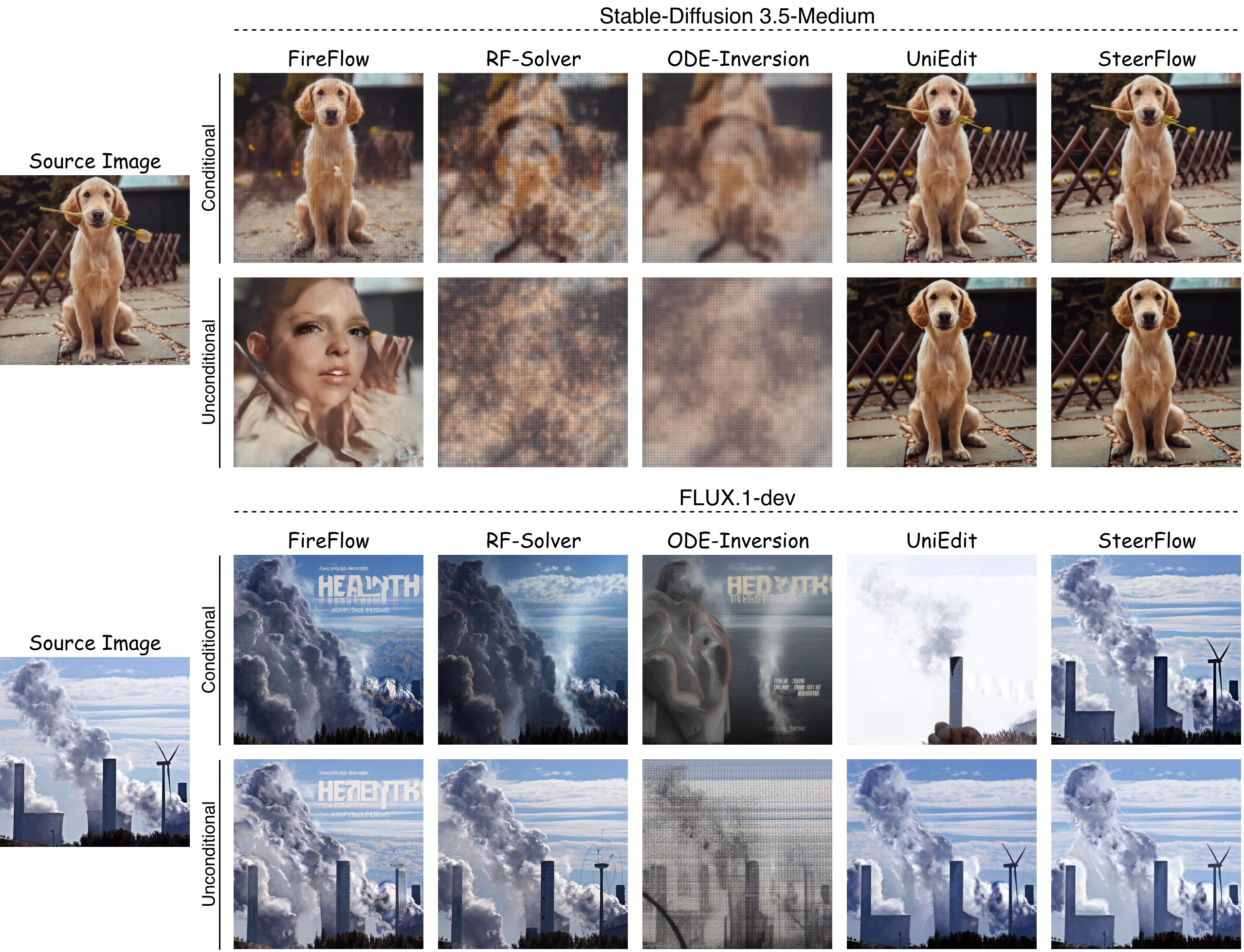}
    \caption{Qualitative comparison on inversion \& reconstruction performance of baselines and SteerFlow.}
    \label{fig:recon_performance}
\end{figure}

\newpage

\section{SteerFlow Editing Algorithm}
\label{appendix:steerflow_algorithm}
\subsection{Overall Pipeline}
The overall pipeline of SteerFlow works as follows. First, we obtain the \textbf{base mask} by feeding the source image and the editing tokens into SAM3. This process is relatively fast with almost no overhead, as we benchmark that an inference through SAM3 takes less than 1 second. Second, we perform Forward Inversion with \cref{alg:steerflow_forward} to obtain the \textbf{full forward trajectory}. With the forward trajectory and the base mask, we perform Backward Editing with \cref{alg:steerflow_backward} to obtain the \textbf{edited outcome}. At each iteration of the backward process, \cref{alg:mask_refinement} refines the base mask with a velocity-driven adaptive mask.

\begin{algorithm}[!ht]
\SetAlgoLined
\LinesNumbered
\DontPrintSemicolon
\caption{\texttt{ForwardInv} (SteerFlow Forward Inversion)}
\label{alg:steerflow_forward}
\KwIn{Initial latent $Z_{t_0}$, source prompt $c_{src}$, step size $\Delta t$, Picard iterations $K$}
\KwOut{Trajectory $\{Z_{t_i}\}_{i=0}^N$}

$V^{src} \leftarrow v_\theta(Z_{t_0}, t_0, c_{src})$ \tcp*{Initial velocity guess}

\tcp{Amortized Initialization at $t_0$}
\For{$k = 1$ \KwTo $K$}{
    $\tilde{Z}_{t_1} \leftarrow Z_{t_0} + \Delta t \cdot V^{src}$ \;
    $V^{src} \leftarrow v_\theta(\tilde{Z}_{t_1}, t_1, c_{src})$ \;
}
$Z_{t_1} \leftarrow Z_{t_0} + \Delta t \cdot V^{src}$ \;
\tcp{Forward ODE with Single-Step Fixed-Point Solver}
\For{$i = 1$ \KwTo $N-1$}{
    $\tilde{Z}_{t_{i+1}} \leftarrow Z_{t_i} + \Delta t \cdot V^{src}$ \tcp*{Predictor using cached $V$}
    $V^{src} \leftarrow v_\theta(\tilde{Z}_{t_{i+1}}, t_{i+1}, c_{src})$ \tcp*{Fixed-point correction}
    $Z_{t_{i+1}} \leftarrow Z_{t_i} + \Delta t \cdot V^{src}$
}
\Return{$\{Z_{t_i}\}_{i=0}^N$}
\end{algorithm}

\begin{algorithm}[!ht]
\SetAlgoLined
\LinesNumbered
\DontPrintSemicolon
\caption{\texttt{BackwardEdit} (SteerFlow Backward Editing)}
\label{alg:steerflow_backward}
\KwIn{Source trajectory $\{Z^{src}_{t_i}\}_{i=0}^N$, target prompt $c_{tar}$, base mask $M_{\text{base}}$, guidance scale $w$, interpolation exponent $\gamma$}
\KwOut{Edited image $Z^{tar}_{t_0}$}
$Z^{tar}_{t_N} \leftarrow Z^{src}_{t_N}$ \tcp*{Initialize from source}
\For{$i = N$ \KwTo $1$}{
    $V^{src} \leftarrow (Z^{src}_{t_i} - Z^{src}_{t_{i-1}})\,/\,\Delta t$ \tcp*{Cached source velocity}
    $V^{tar} \leftarrow \tilde{v}_\theta(Z^{tar}_{t_i}, t_i, c_{tar}, w)$ \tcp*{Target velocity with CFG}
    
    $\alpha \leftarrow \text{CosSim}(V^{src}, V^{tar})\cdot (1-{t_{i-1}}^\gamma)$ \tcp*{Edit coefficient}
    $M \leftarrow \texttt{MaskRefine}(V^{tar} - V^{src},\, M_{\text{base}})$ \tcp*{Alg.~\ref{alg:mask_refinement}}
    $V^{edit} \leftarrow V^{src} + \alpha \cdot M \odot (V^{tar} - V^{src})$ \tcp*{Eq.~\ref{eq:steerflow_masked_velocity}}

    $Z^{tar}_{t_{i-1}} \leftarrow Z^{tar}_{t_i} - \Delta t \cdot V^{edit}$ \tcp*{Euler step}
}
\Return{$Z^{tar}_{t_0}$}
\end{algorithm}

\begin{algorithm}[!ht]
\SetAlgoLined
\LinesNumbered
\DontPrintSemicolon
\caption{\texttt{MaskRefine} (Adaptive Mask Refinement)}
\label{alg:mask_refinement}
\KwIn{Velocity difference $\Delta V$, base mask $M_{base}$, quantile $q$, sigmoid temperature $\tau$, dilation kernel size $k$}
\KwOut{Refined adaptive mask $M_{adapt}$}
$M \leftarrow \|\Delta V\|_2$ \tcp*{Per-token $\ell_2$ magnitude}
$v_{\min}, v_{\max} \leftarrow \mathrm{Quantile}(M,\, 1{-}q),\; \mathrm{Quantile}(M,\, q)$ \;
$M \leftarrow (M - v_{\min})\,/\,(v_{\max} - v_{\min})$ \tcp*{(1) Quantile normalization}
$M \leftarrow \sigma\!\left(\tau \cdot (M - 0.5)\right)$ \tcp*{(2) Temperature-scaled sigmoid}
$M_{adapt} \leftarrow \max(M, M_{base})$ \tcp*{(3) Union with SAM3 base mask}
$M_{adapt} \leftarrow \mathrm{Erode}(\mathrm{Dilate}(M_{adapt},\, k),\, k)$ \tcp*{(4) Morphological closing}
\Return{$M_{adapt}$}
\end{algorithm}

Note that in the backward process, the decay factor $(1-t_{i-1}^\gamma)$ is evaluated at the next timestep $t_{i-1}$, since the decay factor evaluated at the current timestep will truncate the first timestep ($V_1^{edit} = V_1^{src}$) and limit editability.

\subsection{Hyperparameter Settings}
In the SteerFlow editing algorithm, the two key parameters that control the balance between source preservation and target alignment are (1) the CFG coefficient $w$ and (2) the decay rate of the editing coefficient $\gamma$. Higher values of $w$ and $\gamma$ lead to better editability but worse source consistency. 

\subsection{Adaptive Masking Mechanism}
As explained in \cref{sec:adaptive_mask}, relying solely on the SAM3 base mask is overly restrictive for editing tasks that involve structural changes. The SteerFlow Adaptive Masking mechanism refines the base mask with velocity-driven region changes inferred through the editing deviation $V_\Delta = V^{tar} - V^{src}$ at each timestep. As shown in \cref{alg:mask_refinement}, the refinement pipeline proceeds in four steps:
\begin{enumerate}
    \item \textbf{Quantile normalization} of $\Delta V$: controls the spatial 
    extent of the velocity-driven mask. By setting $q$ lower than 1, we allow for more expansion of the velocity-driven mask.
    
    \item \textbf{Element-wise sigmoid}: increases the contrast of the adaptive 
    mask to suppress low-response regions for precise background preservation.
    
    \item \textbf{Mask union}: merges the resulting mask with the SAM3 base 
    mask via pixel-wise $\max(\cdot)$.
    
    \item \textbf{Morphological closing}: enforces spatial contiguity by 
    filling internal holes in the combined mask.
\end{enumerate}

\subsection{SteerFlow Extension to Multi-Turn Editing}
SteerFlow extends naturally to Multi-Turn Editing through two steps:

\begin{enumerate}
    \item Reuse the previous turn's editing trajectory as the source trajectory for the current turn. This constrains the accumulated drift after each editing turn.
    \item  Update the base mask at each turn using the corresponding edit tokens.
\end{enumerate}
Since each editing turn may require a different level of editability, 
we allow different decay rates $\gamma$ per turn, resulting in a 
per-turn \textit{editing scheduler} $\{\gamma_k\}_{k=1}^{K}$ for 
$K$ editing turns. SteerFlow Multi-Turn procedure is shown in \cref{alg:steerflow_multiturn}.

\begin{algorithm}
\caption{SteerFlow Multi-Turn Editing}
\label{alg:steerflow_multiturn}
\KwIn{Source image $Z_{t_0}^{src}$, source prompt $c_{src}$, 
      target prompts $\{c_{tar}^{(k)}\}_{k=1}^{K}$, 
      editing tokens $\{e^{(k)}\}_{k=1}^{K}$,
      editing schedulers $\{\gamma_k\}_{k=1}^{K}$}
\KwOut{Edited images $\{Z_{t_0}^{(k)}\}_{k=1}^{K}$}

\tcp{Initial forward inversion}
$\{Z_{t_i}^{src}\}_{i=0}^{N} \leftarrow \texttt{ForwardInv}(Z_{t_0}^{src}, c_{src})$

\For{$k = 1$ \KwTo $K$}{
    \tcp{Update base mask for current turn}
    $M_{base} \leftarrow \texttt{GetBaseMask}(Z_{t_0}^{src}, e^{(k)})$ \tcp*{Feed to SAM3}
    \tcp{Backward editing, storing full trajectory for next turn}
    $Z_{t_0}^{(k)}, \{Z_{t_i}^{src}\}_{i=0}^{N} \leftarrow \texttt{BackwardEdit}(\{Z_{t_i}^{src}\}, c_{tar}^{(k)}, M_{base}, \gamma_k)$
}

\Return $\{Z_{t_0}^{(k)}\}_{k=1}^{K}$
\end{algorithm}
\begin{figure}[!ht]
    \centering
    \includegraphics[width=\linewidth]{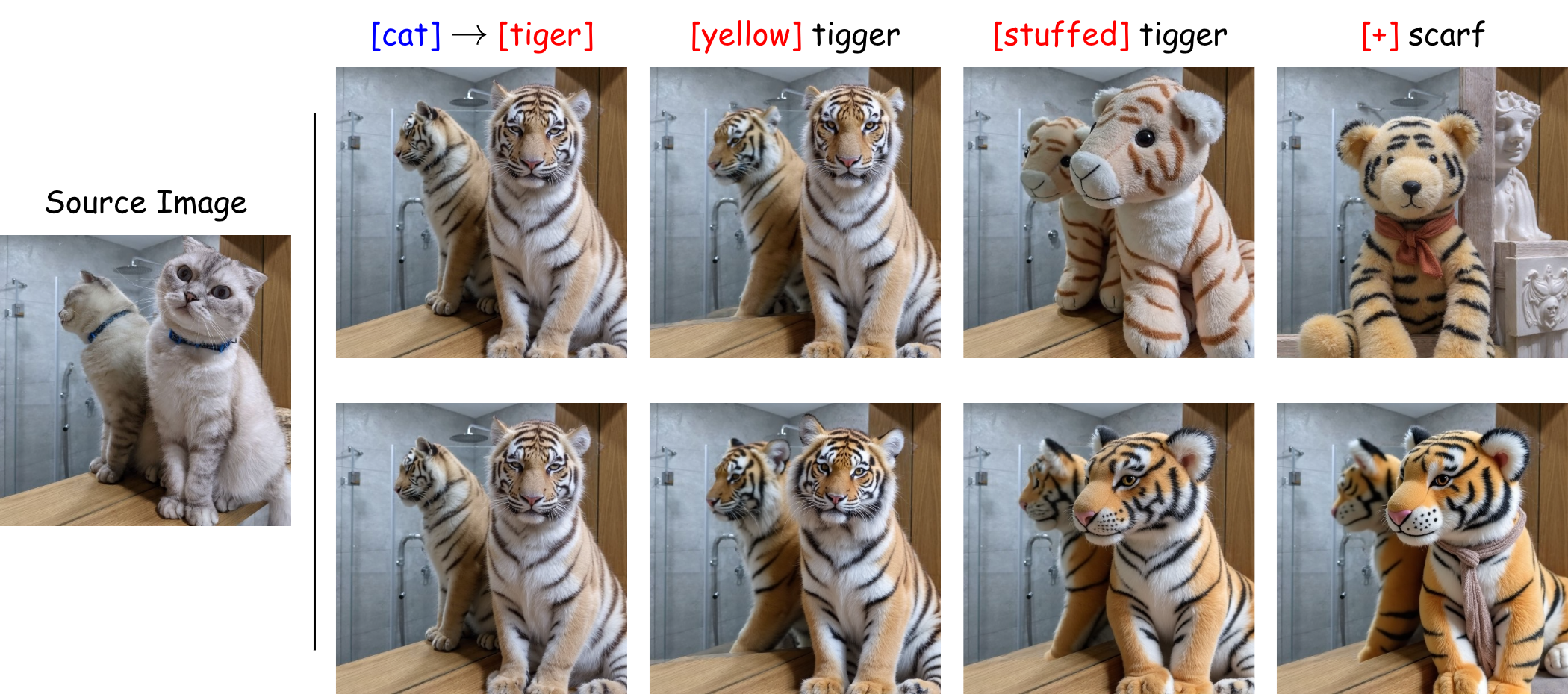}
    \caption{Comparision of SteerFlow multi-turn editing and single-turn editing. First row: single-turn editing. Second row: multi-turn editing.}
    \label{fig:multi_vs_single}
\end{figure}
\noindent\textbf{Comparison to Single-Turn Editing.} To validate that SteerFlow multi-turn editing algorithm (\cref{alg:steerflow_multiturn}) enable consistent edits without accumulating drift, we compare it against single-turn editing. While multi-turn editing uses the previous turn's editing trajectory as the current turn's source trajectory, single-turn editing fix the source trajectory and iteratively update the target prompt. As explained in \cref{sec:trajectory_interpolation}, the editing error bound  is determined by the \textit{editing deviation} (i.e., the difference between  source-conditioned and target-conditioned velocities), so the divergence  between the edited and source images grows as the target prompt becomes increasingly dissimilar from the source prompt.

This is illustrated in the first row of \cref{fig:multi_vs_single}, which corresponds to single-turn editing. The first two turns do not exhibit significant deviation from the source image since the two prompts remain \textit{close} to the source prompt. However, changing the tiger to ``stuffed tiger'' (turn 3) and adding a ``scarf'' (turn 4) result in greater loss of source structure as the gap between the target and source prompts widens. By contrast, multi-turn editing in the second row effectively constrains the drift the turn 3 and turn 4 by updating the source trajectory at each turn, which keeps the source and target velocities better aligned.

\section{Proofs of Theoretical Results}
\label{appendix:proofs}
We first state the assumptions and theorem required for the proofs.
\begin{assumption}[Lipschitz Continuity]
\label{asm:lipschitz}
There exists a constant $L > 0$ such that for all states $z, z' \in \mathbb{R}^d$ and time $t \in [0, 1]$, the vector field $v_\theta(z, t)$ satisfies:
$$||v_\theta(z, t) - v_\theta(z', t)|| \leq L ||z - z'||$$
\end{assumption}

\begin{assumption}[Bounded Trajectory Curvature]
\label{asm:bounded_curvature}
There exists a constant $M$ such that the total time derivative of the velocity field is bounded:
$$\left\| \frac{d}{dt} v_\theta(Z_t, t) \right\| \le M$$
for all $t \in [0, 1]$. This bounds the maximum curvature (acceleration) of the flow.
\end{assumption}

\begin{theorem}[Banach Fixed-Point Theorem]
\label{theorem:banach}
Let $(X, d)$ be a complete metric space and let $T: X \to X$ be a strict contraction, i.e., there exists $q \in [0,1)$ such that for all $x, y \in X$,
$$d(T(x), T(y)) \le q \cdot d(x, y).$$
Then $T$ has a unique fixed point $x^* \in X$, and the sequence $x_{k+1} = T(x_k)$ converges to $x^*$ for any initial $x_0 \in X$.
\end{theorem}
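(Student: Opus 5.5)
The plan is the classical iterate-and-complete argument: show that the Picard sequence $x_{k+1} = T(x_k)$ is Cauchy, use completeness of $(X,d)$ to get a limit, show the limit is fixed by $T$, and finally prove uniqueness directly from the contraction inequality. No earlier result of the paper is needed. Assumptions~\ref{asm:lipschitz} and \ref{asm:bounded_curvature} enter only later, when this theorem is applied to the update map $\Phi_i$ in \cref{eq:update_map}.

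\textbf{Step 1: geometric decay of consecutive gaps.} Fix any $x_0 \in X$. By induction using the contraction property,
\[
d(x_{k+1}, x_k) = d(T(x_k), T(x_{k-1})) \le q\, d(x_k, x_{k-1}) \le \dots \le q^k d(x_1, x_0).
\]

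\textbf{Step 2: the sequence is Cauchy.} For $m > n$, the triangle inequality and the geometric series give
\[
d(x_m, x_n) \le \sum_{j=n}^{m-1} d(x_{j+1}, x_j) \le d(x_1,x_0)\sum_{j=n}^{\infty} q^j = \frac{q^n}{1-q}\, d(x_1, x_0).
\]
This uses $q<1$ essentially. Since $q^n \to 0$, the sequence $(x_k)$ is Cauchy, so completeness yields a limit $x^* \in X$.

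\textbf{Step 3: the limit is a fixed point.} A contraction is Lipschitz, hence continuous. Then
\[
d(T(x^*), x^*) \le d(T(x^*), T(x_k)) + d(x_{k+1}, x^*) \le q\, d(x^*, x_k) + d(x_{k+1}, x^*) \to 0,
\]
so $T(x^*) = x^*$.

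\textbf{Step 4: uniqueness.} If $y^*$ is another fixed point, then $d(x^*, y^*) = d(T(x^*), T(y^*)) \le q\, d(x^*, y^*)$. Since $q<1$, this forces $d(x^*, y^*)=0$. Uniqueness also shows the limit does not depend on $x_0$, which gives convergence from any initial point as claimed.

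The only mildly delicate step is the Cauchy estimate in Step 2, where the strict inequality $q<1$ is needed to sum the tail. I would also record its by-product, the a priori bound $d(x_n, x^*) \le \frac{q^n}{1-q} d(x_1, x_0)$. It is the quantitative form that will later justify the claim that the fixed-point iteration for $v_i^*$ reduces the velocity mismatch at each step when $\Delta t\, L < 1$.
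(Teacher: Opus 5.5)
Your proof is correct. It is the standard Picard-iteration argument: geometric decay of consecutive gaps, the Cauchy property from the geometric tail, passage to the limit to get a fixed point, and uniqueness from $q<1$. The paper does not prove this theorem. It quotes it as a classical result and uses it only in the proof of \cref{prop:convergence}, where it checks the contraction hypothesis for $\Phi_i$ with $q = L\Delta t$, so there is no alternative route in the paper to compare yours against.

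Two small remarks.

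First, your ``fix any $x_0 \in X$'' silently assumes $X \neq \emptyset$, which the statement as written leaves out. For $X=\emptyset$ the empty map vacuously satisfies the hypotheses, yet no fixed point exists. This is harmless here, since $X=\mathbb{R}^d$ in the application.

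Second, your closing comment misattributes where the per-iteration improvement comes from. The a priori bound $d(x_n,x^*)\le \frac{q^n}{1-q}\,d(x_1,x_0)$ only gives a decaying envelope, and that alone does not imply a decrease at every step. Strict decrease at each iteration follows from the one-line estimate $d(x_{k+1},x^*) = d(T(x_k),T(x^*)) \le q\,d(x_k,x^*)$. For the residual $\|v_i^{k+1}-v_i^{k}\|$ of the implicit condition, it follows from your Step 1. Either of these is what actually supports the paper's ``monotonically converges'' claim in \cref{prop:convergence}.
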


\subsection{Proof of \cref{prop:inversion_error}}
\label{proof_1}
\begin{proof}
We track the accumulation of the inversion error $\mathcal{E}_{t_i} = \| \hat{Z}_{t_i} - Z_{t_i} \|$ as we integrate backward from $t_N=1$ down to $t_0=0$. At the initial point of the backward process ($i=N$), $\mathcal{E}_{t_N} = 0$ since $\hat{Z}_1 = Z_1$. The error arises from the asymmetry between forward and backward latents:
\begin{align}
\text{Forward Latent: } \quad Z_{t_i} &= Z_{t_{i+1}} - \Delta t \cdot v_\theta(Z_{t_i}, t_i) \\
\text{Backward Latent: } \quad \hat{Z}_{t_i} &= \hat{Z}_{t_{i+1}} - \Delta t \cdot v_\theta(\hat{Z}_{t_{i+1}}, t_{i+1})
\end{align}
The inversion error at time $t_i$ is:
\begin{align}
\mathcal{E}_{t_i} &= \|\hat{Z}_{t_i} - Z_{t_i}\| = \| (\hat{Z}_{t_{i+1}} - Z_{t_{i+1}}) - \Delta t \left( v_\theta(\hat{Z}_{t_{i+1}}, t_{i+1}) - v_\theta(Z_{t_i}, t_i) \right) \| \\
&\leq \mathcal{E}_{t_{i+1}} + \Delta t \| v_\theta(\hat{Z}_{t_{i+1}}, t_{i+1}) - v_\theta(Z_{t_i}, t_i) \| \quad \text{(Triangle Inequality)}
\label{eq:step_error}
\end{align}
We bound $\| v_\theta(\hat{Z}_{t_{i+1}}, t_{i+1}) - v_\theta(Z_{t_i}, t_i) \|$ via the cross-term $v_\theta(Z_{t_{i+1}}, t_{i+1})$:
\begin{align}
\| v_\theta(\hat{Z}_{t_{i+1}}, t_{i+1}) - v_\theta(Z_{t_i}, t_i) \| &\le \| v_\theta(\hat{Z}_{t_{i+1}}, t_{i+1}) - v_\theta(Z_{t_{i+1}}, t_{i+1}) \| \notag \\
&\quad + \| v_\theta(Z_{t_{i+1}}, t_{i+1}) - v_\theta(Z_{t_i}, t_i) \|
\label{eq:cross_term}
\end{align}
By the second-order Taylor expansion of $v_\theta(Z_{t_{i+1}}, t_{i+1})$ around $(Z_{t_i}, t_i)$:
\begin{align}
v_\theta(Z_{t_{i+1}}, t_{i+1}) \approx v_\theta(Z_{t_i}, t_i) + \Delta t \frac{\partial v_\theta}{\partial t} + \nabla_z v_\theta \cdot (Z_{t_{i+1}} - Z_{t_i}) + O(\Delta t^2)
\end{align}
Since $Z_{t_{i+1}} - Z_{t_i} = \Delta t \cdot v_\theta(Z_{t_i}, t_i)$, 
\begin{align}
v_\theta(Z_{t_{i+1}}, t_{i+1}) &\approx v_\theta(Z_{t_i}, t_i) + \Delta t \frac{\partial v_\theta}{\partial t} + \nabla_z v_\theta \cdot \left(\Delta t \cdot v_\theta(Z_{t_i}, t_i)\right) + O(\Delta t^2) \\
&\approx v_\theta(Z_{t_i}, t_i) + \Delta t \cdot \frac{dv_\theta}{d t} \quad \text{(Disregard $O(\Delta t^2)$)} \\
&\Rightarrow \| v_\theta(Z_{t_{i+1}}, t_{i+1}) - v_\theta(Z_{t_i}, t_i) \| \leq M \Delta t
\label{eq:taylor_expansion}
\end{align}
Applying \cref{eq:taylor_expansion} and \cref{asm:lipschitz} to \cref{eq:cross_term}, we have:
\begin{align}
\| v_\theta(\hat{Z}_{t_{i+1}}, t_{i+1}) - v_\theta(Z_{t_i}, t_i) \| &\le L \cdot \| \hat{Z}_{t_{i+1}} - Z_{t_{i+1}} \| + M \Delta t \\
&= L \cdot \mathcal{E}_{t_{i+1}} + M \Delta t \quad \text{(\cref{asm:bounded_curvature})}
\label{eq:velocity_bound}
\end{align}
Applying \cref{eq:velocity_bound} back to \cref{eq:step_error}, we obtain the final error bound:
\begin{align}
\mathcal{E}_{t_i} \leq (1 + L\Delta t) \mathcal{E}_{t_{i+1}} + M \Delta t^2
\end{align}
To solve for $\mathcal{E}_{0}$, we unroll this recurrence backward in time from $i=N-1$ to $i=0$. The recurrence strictly follows a geometric series summation:
\begin{align}
\mathcal{E}_{t_0} \le M \Delta t^2 \sum_{j=0}^{N-1} (1 + L \Delta t)^j
\end{align}
Using the standard geometric series formula $\sum_{j=0}^{N-1} r^j = \frac{r^N - 1}{r - 1}$ with the ratio $r = 1 + L \Delta t$, we arrive at the precise global error bound:
\begin{align}
\mathcal{E}^{inv}_{0} \le M \Delta t^2 \frac{(1 + L \Delta t)^N - 1}{(1 + L \Delta t) - 1} = \frac{M \Delta t}{L} \left( (1 + L\Delta t)^N - 1 \right)
\end{align}
Finally, since $\Delta t = 1/N$, by applying the limit definition of the exponential function $(1 + \frac{L}{N})^N \le e^L$ for all $N \ge 1$, we obtain the exponential upper bound that is indepedent of $N$:
\begin{align}
\mathcal{E}^{inv}_{0} \le \frac{M \Delta t}{L} (e^L - 1) \quad \text{(As $N \to \infty$)}
\end{align}
\end{proof}

\begin{proposition}[Convergence of The Update Map]
\label{prop:convergence}
Under the condition that the step size $\Delta t < \frac{1}{L}$ ($L$ is the Lipschitz constant), the update map $\Phi_i$ is a strict contraction mapping. By the Banach Fixed-Point Theorem (\cref{theorem:banach}, the sequence $\{v_i^k\}$ is guaranteed to converge to a unique solution $v^*_i$.
\end{proposition}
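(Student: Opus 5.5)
We show directly that $\Phi_i$ is a strict contraction on $\mathbb{R}^d$ with the Euclidean norm, and then invoke \cref{theorem:banach}. The underlying space is $(\mathbb{R}^d, \|\cdot\|)$, which is complete because it is a finite-dimensional normed space. The map $\Phi_i(v) = v_\theta(Z_{t_i} + \Delta t \cdot v, t_{i+1})$ sends $\mathbb{R}^d$ into $\mathbb{R}^d$. Here $Z_{t_i}$ and $t_{i+1}$ are held fixed at step $i$, so $\Phi_i$ depends only on its velocity argument.

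The key step is a one-line Lipschitz estimate. For any $v, v' \in \mathbb{R}^d$, \cref{asm:lipschitz} applies at the fixed time $t_{i+1}$ and gives
\begin{align}
\|\Phi_i(v) - \Phi_i(v')\| &= \|v_\theta(Z_{t_i} + \Delta t\, v, t_{i+1}) - v_\theta(Z_{t_i} + \Delta t\, v', t_{i+1})\| \notag \\
&\le L \,\|\Delta t\,(v - v')\| = L\Delta t\, \|v - v'\|.
\end{align}
Set $q = L\Delta t$. The hypothesis $\Delta t < 1/L$, with $L>0$, gives $q \in [0,1)$, so $\Phi_i$ is a strict contraction.

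\cref{theorem:banach} then yields a unique fixed point $v_i^*$, which satisfies the implicit condition in \cref{eq:implicit_condition}. It also shows that the iterates $v_i^{k+1} = \Phi_i(v_i^k)$ converge to $v_i^*$ from any initialization, in particular from $v_i^0 = v_\theta(Z_{t_i}, t_i)$. As a remark supporting the claim in the main text that the mismatch strictly decreases, we will record the quantitative rate. Since $\|v_i^{k+1} - v_i^*\| = \|\Phi_i(v_i^k) - \Phi_i(v_i^*)\| \le q\|v_i^k - v_i^*\|$, the error decays geometrically: $\|v_i^k - v_i^*\| \le q^k \|v_i^0 - v_i^*\|$.

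There is no real obstacle here; the argument is routine. The only subtlety is that \cref{asm:lipschitz} must hold uniformly in $z$ at the fixed time $t_{i+1}$, and this is exactly how the assumption is stated. We will also note that the contraction is global on $\mathbb{R}^d$, so we need not check that the iterates stay inside a bounded invariant set.
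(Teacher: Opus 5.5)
Your proposal is correct and follows the paper's proof essentially line for line: you use the completeness of $(\mathbb{R}^d,\|\cdot\|)$, the Lipschitz estimate $\|\Phi_i(v)-\Phi_i(u)\|\le L\Delta t\,\|v-u\|$ at the fixed time $t_{i+1}$, and then apply the Banach Fixed-Point Theorem with $q=L\Delta t<1$. Your added geometric rate $\|v_i^k-v_i^*\|\le q^k\|v_i^0-v_i^*\|$ is correct, and it is a more precise form of the paper's informal remark that the iterates converge monotonically.
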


\subsection{Proof of \cref{prop:convergence}}
\begin{proof}
The proof applies the Banach Fixed-Point Theorem (\cref{theorem:banach}) directly to the velocity update map by leveraging the Lipschitz assumption (\cref{asm:lipschitz}). The update map $\Phi_i: \mathbb{R}^d \to \mathbb{R}^d$ is defined at a specific timestep $i$ as:
$$\Phi_i(v) = v_\theta(Z_{t_i} + \Delta t \cdot v, t_{i+1})$$
The space $\mathbb{R}^d$ equipped with the standard $L_2$ norm metric $d(u, v) = \|u - v\|$ forms a complete metric space (a Banach space). To apply the Banach Fixed-Point Theorem, we must show that $\Phi_i$ is a strict contraction mapping on this space.

For any $u, v \in \mathbb{R}^d$, by \cref{asm:lipschitz},
\begin{align}
\|\Phi_i(v) - \Phi_i(u)\|
&= \| v_\theta(Z_{t_i} + \Delta t \cdot v, t_{i+1}) - v_\theta(Z_{t_i} + \Delta t \cdot u, t_{i+1}) \| \notag \\
&\le L \| (Z_{t_i} + \Delta t \cdot v) - (Z_{t_i} + \Delta t \cdot u) \| \notag \\
&= \left(L \Delta t\right) \| v - u \|.
\end{align}
Let $q = L \Delta t$. The assumption $\Delta t < \frac{1}{L}$ implies $q \in [0,1)$, so $\Phi_i$ is a strict contraction. Applying Banach Fixed-Point Theorem guarantees that:
\begin{enumerate}
    \item There exists a uniquely defined fixed-point velocity $v^*_i \in \mathbb{R}^d$ satisfying $\Phi_i(v^*_i) = v^*_i$.
    \item The sequence generated by $v_i^{k+1} = \Phi_i(v_i^k)$ monotonically converges to $v^*_i$ for any initialization $v_i^0$.
\end{enumerate} 
By initializing the sequence with the explicit Euler velocity, $v_i^0 = v_\theta(Z_{t_i}, t_i)$, the AFP solver strictly minimizes the local discretization gap $\delta_i = \| v_\theta(Z_{t_i}, t_i) - v_\theta(Z_{t_{i+1}}, t_{i+1}) \|$. This procedure effectively ``pulls'' the velocity at $t_i$ toward the future state at $t_{i+1}$, aligning the velocity with the local curvature of the flow.
\end{proof}

\subsection{Proof of \cref{prop:euler_edit_error}}
\begin{proof}
Starting at $Z_{t_N}^{src}$ from the forward inversion process (\cref{eq:forward}), we can perfectly reconstruct the source image $Z_{t_0}^{src}$ by reversing the ODE:
\begin{equation}
    Z_{t_0}^{src} = Z_{t_N}^{src} - \Delta t \cdot \underbrace{\left(\sum_{i=0}^{N-1} v_\theta(Z_{t_i}^{src}, t_i, c_{src})\ \right)}_{\text{Cached source velocities}}
    \label{eq:reconstruction}
\end{equation}
Starting at $Z_{t_N}^{tar} = Z_{t_N}^{src}$, the editing outcome is as follows:
\begin{equation}
    Z_{t_0}^{tar} = Z_{t_N}^{tar} - \Delta t \cdot \left(\sum_{i=1}^{N} v_\theta(Z_{t_i}^{tar}, t_i, \varnothing) + w\left(v_\theta(Z_{t_i}^{tar}, t_i, c_{tar}) - v_\theta(Z_{t_i}^{tar}, t_i, \varnothing) \right)  \ \right)
    \label{eq:edit_output}
\end{equation}
From \cref{eq:edit_output} and \cref{eq:reconstruction}, by rearranging the terms we have:
\begin{align}
    \|Z_{t_0}^{tar} - Z_{t_0}^{src}\| &= \Delta t \cdot \| \sum_{i=1}^{N} v_\theta(Z_{t_i}^{tar}, t_i, c_{tar}) - v_\theta(Z_{t_{i-1}}^{src}, t_{i-1}, c_{src}) \notag \\  & \quad \quad+ (w-1)  \left(v_\theta(Z_{t_i}^{tar}, t_i, c_{tar}) - v_\theta(Z_{t_i}^{tar}, t_i, \varnothing) \right)  \ \| \\
    &\leq \Delta t \left( \| V_\Delta \| + (w-1) \| V_{CFG} \| \right) \quad \text{(Triangle Inequality)} \\
    \text{where }V_{\Delta} = &\sum_{i=1}^{N} v^{tar}_\theta (Z^{tar}_{t_{i}}) - v^{src}_\theta(Z^{src}_{t_{i-1}}); \; V_{CFG} = \sum_{i=1}^N v^{tar}_\theta (Z^{tar}_{t_i}) - v^{\varnothing}_\theta(Z^{tar}_{t_i}) \notag
\end{align}
\end{proof}

\subsection{Proof of \cref{prop:steerflow_error}}
\begin{proof}
Define the maximum instantaneous velocity deviation along the source trajectory as
\begin{equation}
    \delta_{\max} = \max_t \left\| \tilde{v}_\theta(Z_t^{\mathrm{src}}, c_{\mathrm{tar}}) - v_\theta(Z_t^{\mathrm{src}}, c_{\mathrm{src}}) \right\|
    \label{eq:delta_max}
\end{equation}
Let $V_{t_i}^{\mathrm{src}} = v_\theta(Z^{\mathrm{src}}_{t_{i-1}}, t_{i-1}, c_{\mathrm{src}})$ denote the cached source velocity from the forward process, and let $V_{t_i}^{\mathrm{tar}} = \tilde{v}_\theta(Z^{\mathrm{edit}}_{t_i}, t_i, c_{\mathrm{tar}}, w)$ denote the target-conditioned velocity evaluated at the SteerFlow editing latent $Z^{\mathrm{edit}}_{t_i}$. The SteerFlow editing velocity is then defined as the interpolation
\begin{equation}
    V^{\mathrm{edit}}_t = V^{\mathrm{src}}_t + \alpha \cdot \left(V^{\mathrm{tar}}_t - V^{\mathrm{src}}_t\right),
\end{equation}
from which the source and editing latents at time $t_i$ are given by:
\begin{align}
    Z^{src}_{t_i} &= Z^{src}_{t_{i+1}} - \Delta t \cdot V_{t_i}^{src} \\
    Z^{edit}_{t_i} &= Z^{edit}_{t_{i+1}} - \Delta t \cdot \left(V^{src}_{t_i} + \alpha \cdot (V^{tar}_{t_i} - V^{src}_{t_i}) \right)
\end{align}
Then the editing error at time $t_i$ is
\begin{align}
    \mathcal{E}_{t_i} = \| Z^{edit}_{t_i} - Z^{src}_{t_i} \| \leq \mathcal{E}_{t_{i+1}} +  \alpha \Delta t \cdot \| V_{t_i}^{tar} - V_{t_i}^{src} \| \quad \text{(Triangle Inequality)}
    \label{eq:edit_error}
\end{align}
To bound $\| V_{t_i}^{tar} - V_{t_i}^{src} \|$, we leverage the cross-term $\tilde{v}_\theta(Z^{src}_{t_{i+1}}, c_{tar})$:
\begin{align}
    \| V_{t_i}^{tar} - V_{t_i}^{src} \| &= \| \tilde{v}_\theta(Z^{edit}_{t_{i+1}}, c_{tar}) - \tilde{v}_\theta(Z^{src}_{t_{i+1}}, c_{tar}) \notag \\
    &\quad + \tilde{v}_\theta(Z^{src}_{t_{i+1}}, c_{tar}) - v_\theta(Z^{src}_{t_{i+1}}, c_{src}) \| \\
    &\leq L\mathcal{E}_{t_{i+1}} + \delta_{\max} \quad \text{(\cref{asm:lipschitz} and \cref{eq:delta_max})}
    \label{eq:deviation_error}
\end{align}
Applying \cref{eq:deviation_error} to \cref{eq:edit_error}, we have:
\begin{align}
    \mathcal{E}_{t_i} \le (1 + \alpha L \Delta t) \mathcal{E}_{t_{i+1}} + \alpha \Delta t \delta_{\max}
\end{align}
By solving this recurrence from $t_N$ to $t_0$ similar to \cref{proof_1}, we have the generalized global error bound:
\begin{align}
\mathcal{B}(\alpha) = \frac{\delta_{\max}}{L} \left( e^{\alpha L} - 1 \right)
\end{align}
For standard editing without interpolation ($\alpha = 1$), the error bound is:
\begin{align}
\mathcal{B}^{tar}_0 = \mathcal{B}(1) = \frac{\delta_{\max}}{L} \left( e^{L} - 1 \right)
\end{align}
For SteerFlow with interpolation factor $\alpha \in (0, 1)$, the error bound is:
\begin{align}
\mathcal{B}^{edit}_0 = \mathcal{B}(\alpha) = \frac{\delta_{\max}}{L} \left( e^{\alpha L} - 1 \right)
\end{align}
To compare $\mathcal{B}^{edit}_0$ and $\mathcal{B}^{tar}_0$, we analyze the function $f(x) = e^x - 1$. Because the exponential function is strictly convex ($f''(x) = e^x > 0$) and $f(0) = 0$, it follows from the definition of convexity that for any $\alpha \in (0, 1)$:
\begin{equation}
f(\alpha x + (1-\alpha) \cdot 0) < \alpha f(x) + (1-\alpha) f(0) \implies e^{\alpha L} - 1 < \alpha (e^L - 1)
\end{equation}
Substituting this inequality into our error bounds yields the final result:
\begin{equation}
\mathcal{B}^{edit}_0 = \frac{\delta_{\max}}{L} \left( e^{\alpha L} - 1 \right) < \alpha \frac{\delta_{\max}}{L} \left( e^{L} - 1 \right) = \alpha \cdot \mathcal{B}^{tar}_0
\end{equation}
\end{proof}

\section{Ablation Studies}
\label{appendix:ablation}
We ablate the key components of SteerFlow to validate each design choice. Specifically, we examine: 
\begin{enumerate}
    \item The performance of AFP Solver with increasing $K$, and how it affects the performance of image editing.
    \item Different decay factor $\gamma$ and guidance scale $w$.
    \item Different $\alpha$-schedulers for Trajectory Interpolation.

\end{enumerate}
All experiments are conducted on FLUX.1-dev using task 0 of PIE-Bench with 140 images of various editing styles.

\begin{figure}[t]
    \centering
    \includegraphics[width=\textwidth]{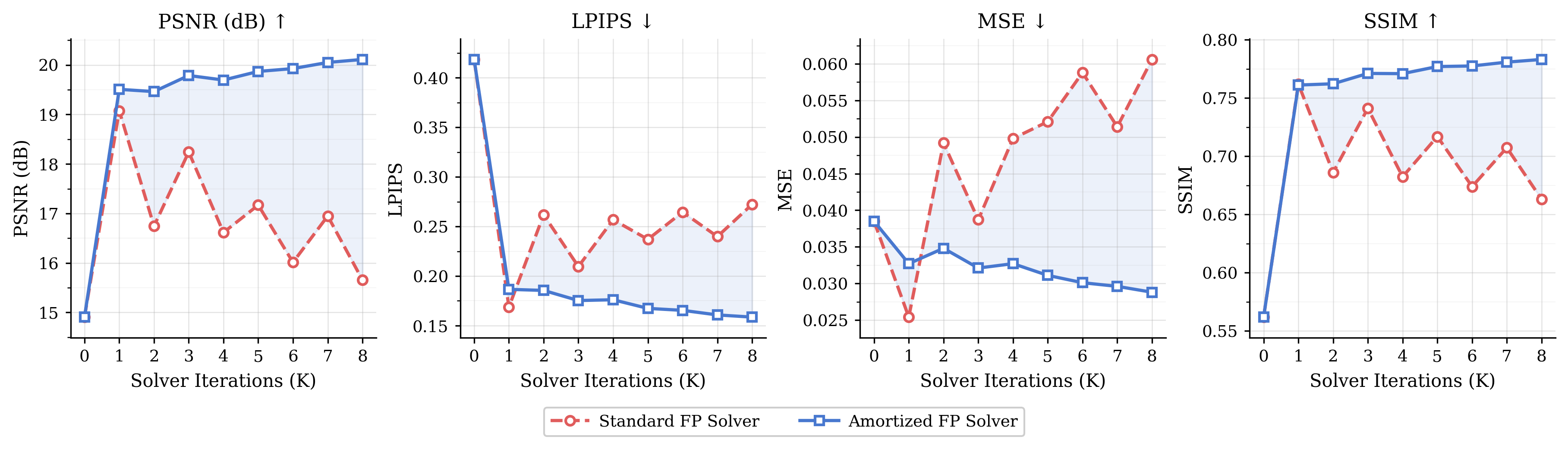}
    \caption{Quantitative comparison of Standard Fixed-Point solver (first row) and AFP solver (second row) with increasing $K$.}
    \label{fig:fixed_point_metrics}
\end{figure}

\begin{figure}[t]
    \centering
    \includegraphics[width=\textwidth]{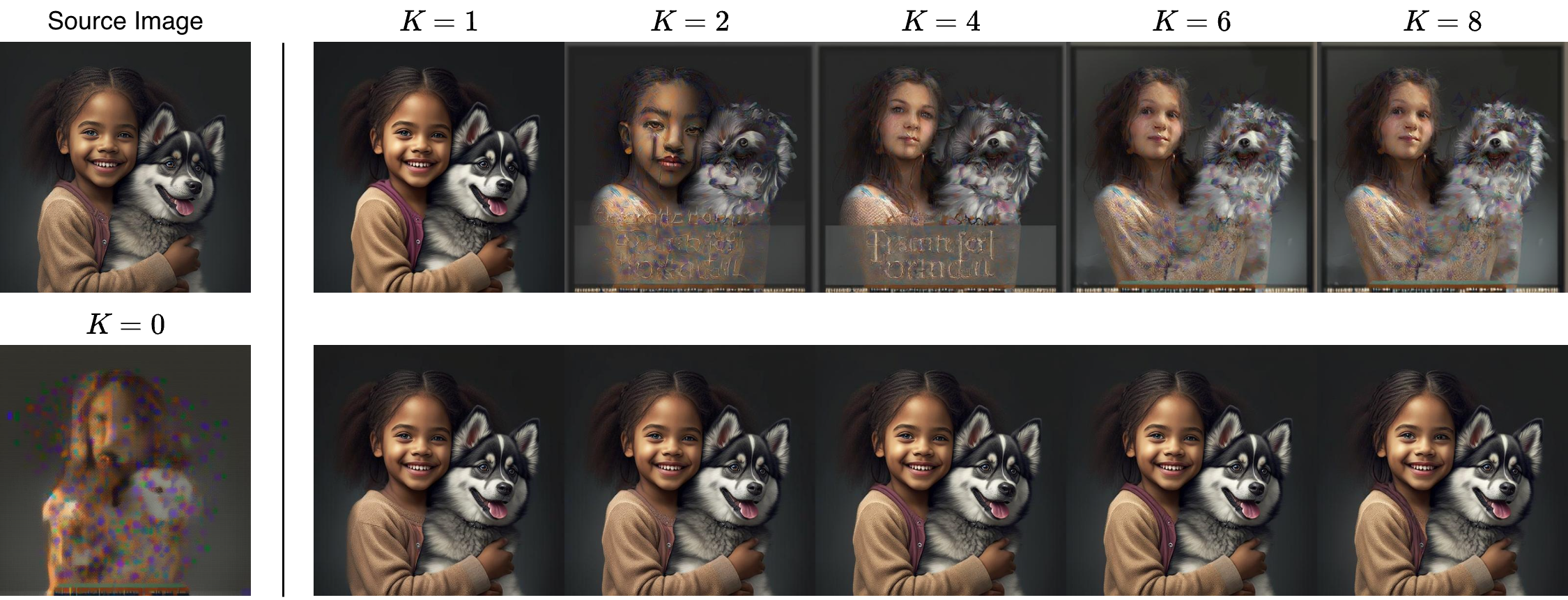}
    \caption{Qualitative comparison of Standard Fixed-Point solver (first row) and AFP solver (second row) with increasing $K$.}
    \label{fig:qualitative_recon}
\end{figure}

\begin{figure}[t]
    \centering
    \includegraphics[width=\textwidth]{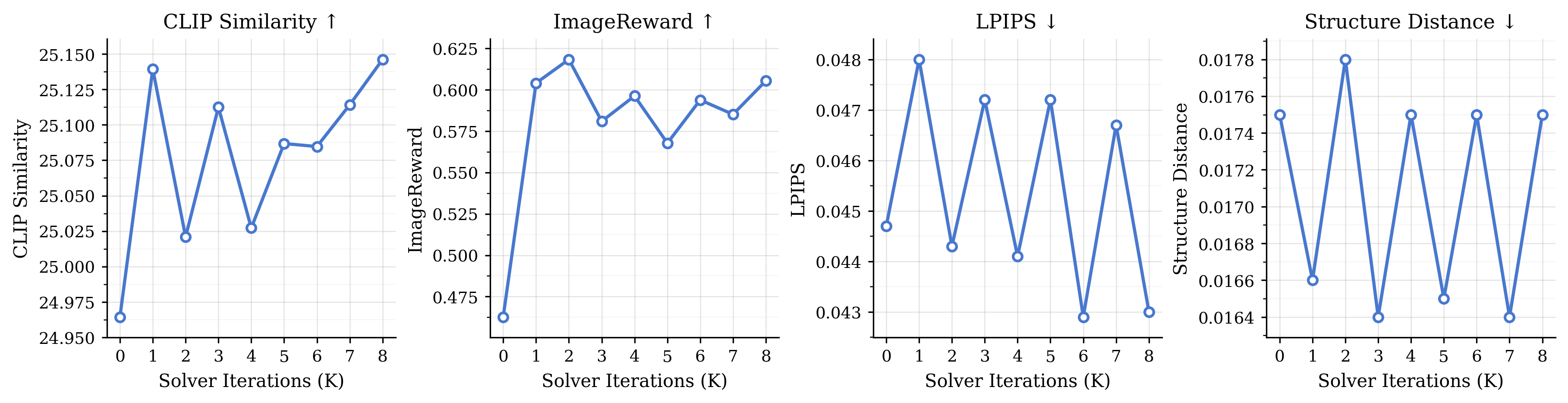}
    \caption{Evaluation of AFP solver on editing performance with increasing $K$.}
    \label{fig:afp_solver_editing}
\end{figure}

\subsection{AFP Solver}
We experimentally validate that the AFP solver mitigates the moving target problem in \cref{fig:fixed_point_metrics}, which compares the reconstruction performance of the AFP solver to the standard fixed-point solver. Across all metrics, the AFP solver achieves increasingly better reconstruction as $K$ grows, consistently outperforming the standard fixed-point solver, which exhibits high variance across $K$. An illustrative comparison between the two solvers is shown in \cref{fig:qualitative_recon}. We can observe that Standard Fixed-Point solver achieve the best reconstruction at $K=1$, and the performance degrades as $K$ increases.

While high $K$ benefits reconstruction performance for AFP Solver, we observe a high variance of editing performance with increasing $K$, as shown in \cref{fig:afp_solver_editing}. The performance gain is most noticeable with $K=1$, which ends up as our final hyperparameter choice for the main experiments.

\begin{figure}[!ht]
    \centering
    \includegraphics[width=\textwidth]{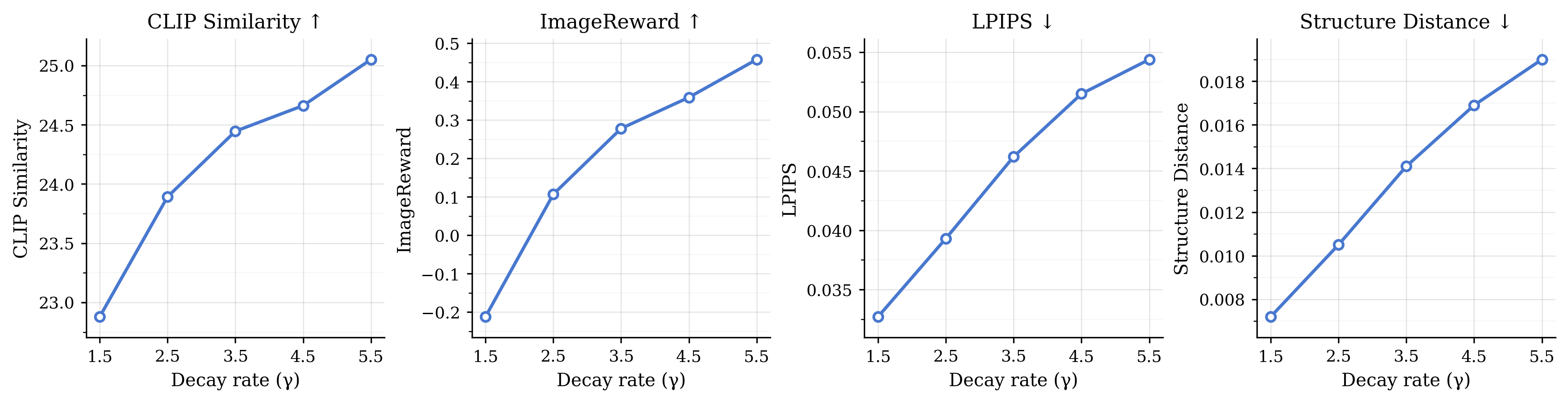}
    \caption{Impact of varying decay rate $\gamma$ on editing performance.}
    \label{fig:sweep_a_metrics}
\end{figure}

\begin{figure}[!ht]
    \centering
    \includegraphics[width=\textwidth]{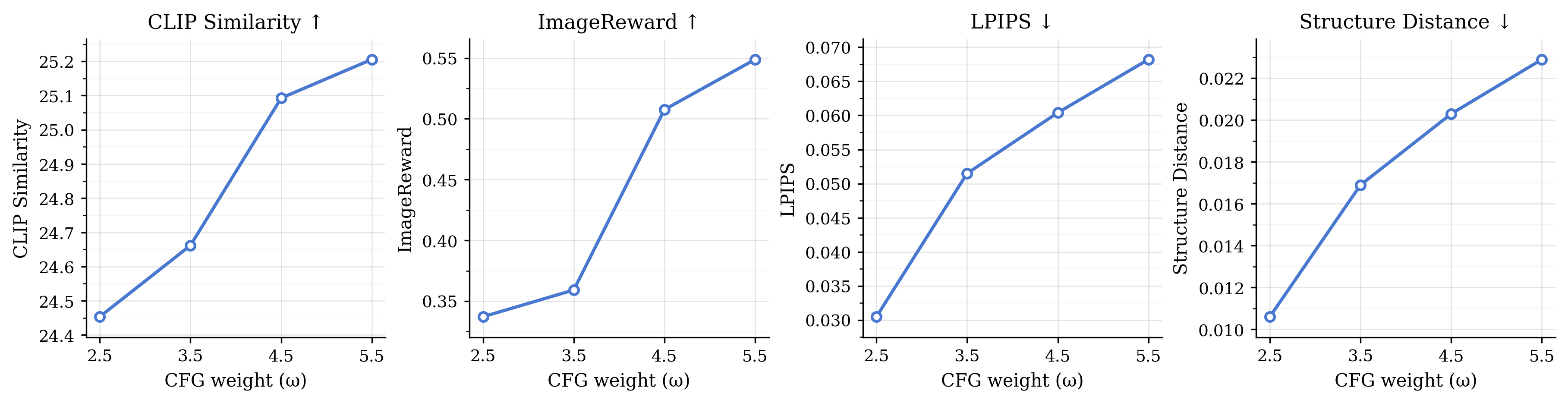}
    \caption{Impact of varying CFG weight $w$ on editing performance.}
    \label{fig:sweep_g_metrics}
\end{figure}

\subsection{Control on Editability ($w$ and $\gamma$)}
As shown in \cref{fig:sweep_a_metrics} and \cref{fig:sweep_g_metrics}, increasing the decay rate $\gamma$ and guidance scale $w$ improves editability (measured by CLIP and ImageReward) but sacrifice source preservation (measured by LPIPS and Structure Distance). Notably, $\gamma$ offers more 
fine-grained control over the trade-off between source preservation and target alignment, whereas setting $w$ too high may introduce unwanted structural changes and artifacts --- a commly known issue of CFG~\cite{saini2025rectified}. The qualitative results of increasing $\gamma$ is shown in \cref{fig:gammas}, where we can observe that setting a high $\gamma$ allows for more structural change.

\begin{figure}[!ht]
    \centering
    \includegraphics[width=\linewidth]{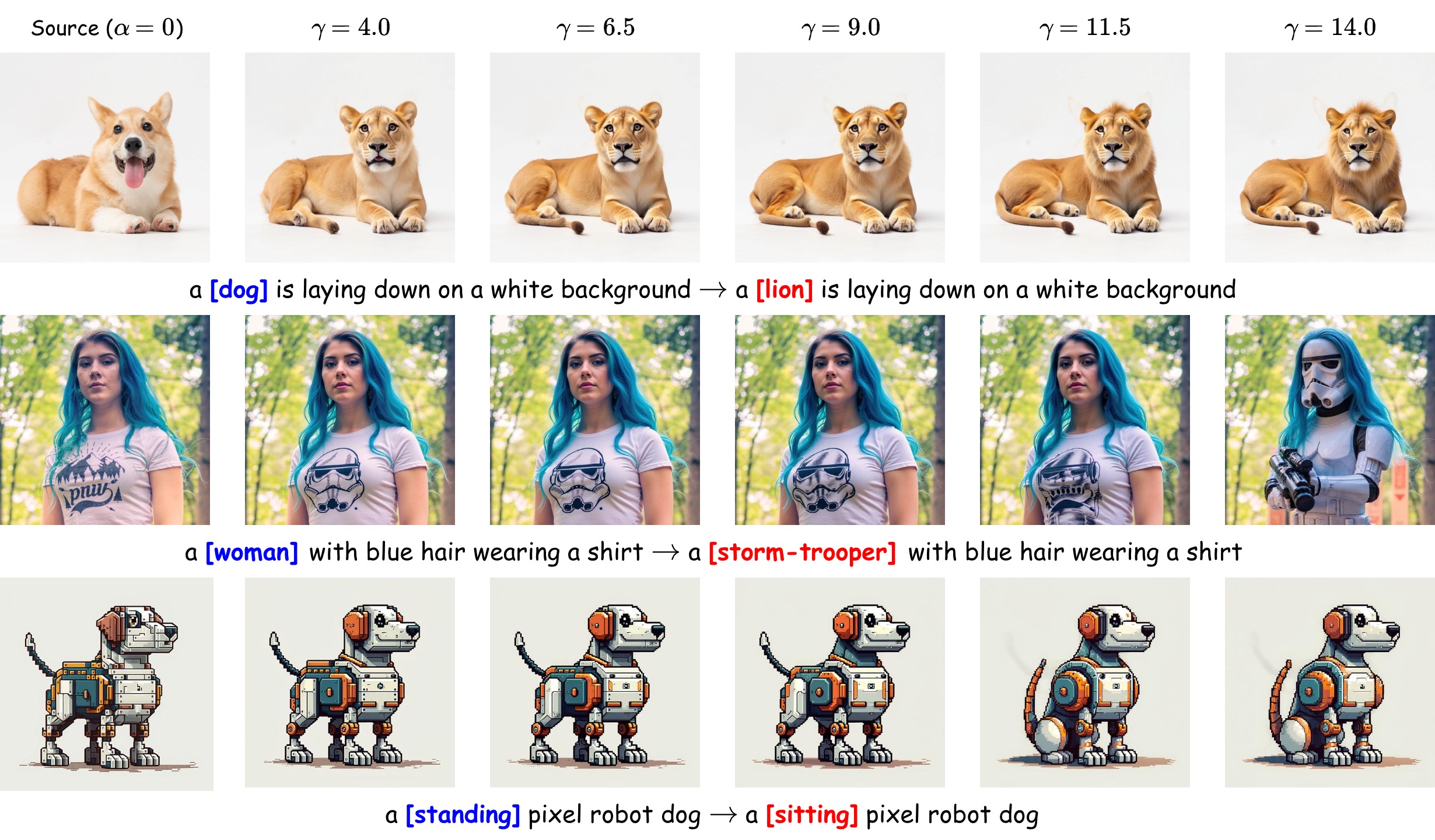}
    \caption{Illustrations of varying decay rate $\gamma$ on editability.}
    \label{fig:gammas}
\end{figure}

\subsection{Different $\alpha$-schedulers}
We evaluate several variants of $\alpha$-schedulers for the backward process in \cref{tab:alpha_schedulers}. Fixing $\alpha$ at low values (0.1 or 0.5) significantly limits the method's editability, whereas a high value (0.9) results in over-editing. While using only CosSim as the editing coefficient provides strong target alignment, it lacks sufficient source preservation. Our final combination of a time-decaying scheduler with cosine similarity achieves the highest source preservation (indicated by the lowest DinoDist and LPIPS scores) while maintaining moderate target alignment that performs similarly to the purely time-decaying approach.

\begin{table}[!ht]
\centering
\caption{SteerFlow editing performance with different $\alpha$-schedulers}
\label{tab:alpha_schedulers}
\begin{tabular}{@{}lcccc@{}}
\toprule
Variant                & DinoDist $\downarrow$ & LPIPS $\downarrow$ & CLIP $\uparrow$ & ImgRew $\uparrow$ \\ \midrule
Fixed $\alpha=0.1$     & 0.05                  & 2.98               & 21.49           & -0.64             \\
Fixed $\alpha=0.5$     & 0.85                  & 10.90              & 23.91           & 0.12              \\
Fixed $\alpha=0.9$     & 7.76                  & 40.31              & 26.57           & 1.08              \\
Only time-decaying     & 3.10                  & 24.55              & 25.32           & 0.70              \\
Only Cossim            & 4.99                  & 30.97              & 26.50           & 1.08              \\
time-decaying + Cossim & 2.20                  & 19.87              & 25.12           & 0.60              \\ \bottomrule
\end{tabular}%
\end{table}

\section{Implementation and Reproducibility Details}
\label{appendix:implementation_details}
\subsection{Experiment Settings}  

For all baseline methods, the hyperparameters strictly adhere to the configurations reported in their original publications and the default settings provided in their respective official repositories. The specific configurations for inversion-based and inversion-free baselines are summarized in \cref{tab:inversion_based_settings} and \cref{tab:inversion_free_settings}, respectively. Notably, as indicated by (*), FlowAlign~\cite{kim2025flowalign} and UniEdit~\cite{jiao2026unieditflow}  utilize a modified Classifier-Free Guidance (CFG) formulation for the target velocity. Unlike the standard approach that extrapolates from an unconditional (null-prompt) velocity, these methods compute the guided target velocity by extrapolating between the source-conditioned and target-conditioned velocities:
\begin{equation}
\label{eq:modified_cfg}
V_t^{tar} = v_\theta(Z_t, t, c_{src}) + w \left( v_\theta(Z_t, t, c_{tar}) - v_\theta(Z_t, t, c_{src}) \right)
\end{equation}

Specific experimental configurations for SteerFlow are shown in \cref{tab:steerflow_settings}, and the hyperparameters of the adaptive masking mechanism are shown in \cref{tab:mask_settings}.
\begin{table}[t]
\centering
\caption{Experiment settings of inversion-based baselines}
\label{tab:inversion_based_settings}
\resizebox{\textwidth}{!}{%
\begin{tabular}{|l|c|ccccccc|}
\hline
\multirow{2}{*}{Method} &
  \multirow{2}{*}{Model} &
  \multicolumn{7}{c|}{Hyperparameters} \\ \cline{3-9} 
 &
   &
  \multicolumn{1}{c|}{timesteps $N$} &
  \multicolumn{1}{c|}{target CFG $w$} &
  \multicolumn{1}{c|}{start step} &
  \multicolumn{1}{c|}{injection steps} &
  \multicolumn{1}{c|}{stop step} &
  \multicolumn{1}{c|}{forward guidance $\gamma$} &
  backward guidance $\eta$ \\ \hline
\multirow{2}{*}{ODE-Inversion} &
  SD3.5 &
  \multicolumn{1}{c|}{50} &
  \multicolumn{1}{c|}{5.5} &
  \multicolumn{1}{c|}{0/50} &
  \multicolumn{1}{c|}{-} &
  \multicolumn{1}{c|}{-} &
  \multicolumn{1}{c|}{-} &
  - \\
 &
  FLUX &
  \multicolumn{1}{c|}{28} &
  \multicolumn{1}{c|}{3.5} &
  \multicolumn{1}{c|}{0/28} &
  \multicolumn{1}{c|}{-} &
  \multicolumn{1}{c|}{-} &
  \multicolumn{1}{c|}{-} &
  - \\ \hline
RF-Inversion &
  FLUX &
  \multicolumn{1}{c|}{28} &
  \multicolumn{1}{c|}{3.5} &
  \multicolumn{1}{c|}{0/28} &
  \multicolumn{1}{c|}{-} &
  \multicolumn{1}{c|}{7/28} &
  \multicolumn{1}{c|}{0.5} &
  0.9 \\ \hline
RF-Solver &
  \multicolumn{1}{l|}{FLUX} &
  \multicolumn{1}{c|}{15} &
  \multicolumn{1}{c|}{2.0} &
  \multicolumn{1}{c|}{0/15} &
  \multicolumn{1}{c|}{2} &
  \multicolumn{1}{c|}{-} &
  \multicolumn{1}{c|}{-} &
  - \\ \hline
FireFlow &
  \multicolumn{1}{l|}{FLUX} &
  \multicolumn{1}{c|}{15} &
  \multicolumn{1}{c|}{2.0} &
  \multicolumn{1}{c|}{0/15} &
  \multicolumn{1}{c|}{2} &
  \multicolumn{1}{c|}{-} &
  \multicolumn{1}{c|}{-} &
  - \\ \hline
\multirow{2}{*}{UniEdit} &
  SD3.5 &
  \multicolumn{1}{c|}{15} &
  \multicolumn{1}{c|}{5.0*} &
  \multicolumn{1}{c|}{6/15} &
  \multicolumn{1}{c|}{-} &
  \multicolumn{1}{c|}{-} &
  \multicolumn{1}{c|}{-} &
  - \\
 &
  FLUX &
  \multicolumn{1}{c|}{30} &
  \multicolumn{1}{c|}{5.0*} &
  \multicolumn{1}{c|}{12/30} &
  \multicolumn{1}{c|}{-} &
  \multicolumn{1}{c|}{-} &
  \multicolumn{1}{c|}{-} &
  - \\ \hline
\end{tabular}%
}
\end{table}
\begin{table}[t]
\centering
\caption{Experiment settings of inversion-free baselines}
\label{tab:inversion_free_settings}
\resizebox{\textwidth}{!}{%
\begin{tabular}{|l|c|ccccc|}
\hline
\multirow{2}{*}{Method}   & \multirow{2}{*}{Model} & \multicolumn{5}{c|}{Hyperparameters}                                                                                \\ \cline{3-7} 
 &
   &
  \multicolumn{1}{c|}{timesteps $N$} &
  \multicolumn{1}{c|}{source CFG $w_s$} &
  \multicolumn{1}{c|}{target CFG $w_t$} &
  \multicolumn{1}{c|}{start step} &
  correction weight $\zeta$ \\ \hline
\multirow{2}{*}{FlowEdit} & SD3.5                  & \multicolumn{1}{c|}{50} & \multicolumn{1}{c|}{1.5} & \multicolumn{1}{c|}{5.5}   & \multicolumn{1}{c|}{17/50} & -    \\
                          & FLUX                   & \multicolumn{1}{c|}{28} & \multicolumn{1}{c|}{5.5} & \multicolumn{1}{c|}{13.5}  & \multicolumn{1}{c|}{4/28}  & -    \\ \hline
FlowAlign                 & SD3.5                  & \multicolumn{1}{c|}{33} & \multicolumn{1}{c|}{-}   & \multicolumn{1}{c|}{13.5*} & \multicolumn{1}{c|}{0/50}  & 0.01 \\ \hline
\end{tabular}%
}
\end{table}
\begin{table}[t]
\centering
\caption{Experiment settings of SteerFlow}
\label{tab:steerflow_settings}
\begin{tabular}{|l|c|cccc|}
\hline
\multirow{2}{*}{Method} & \multirow{2}{*}{Model} & \multicolumn{4}{c|}{Hyperparameters} \\ \cline{3-6} 
 & & \multicolumn{1}{c|}{timesteps $N$} & \multicolumn{1}{c|}{decay rate $\gamma$} & \multicolumn{1}{c|}{CFG $w$} & AFP iterations $K$ \\ \hline
\multirow{2}{*}{SteerFlow} & SD3.5 & \multicolumn{1}{c|}{30} & \multicolumn{1}{c|}{5.5} & \multicolumn{1}{c|}{3.5} & 1 \\
 & FLUX & \multicolumn{1}{c|}{15} & \multicolumn{1}{c|}{4.5} & \multicolumn{1}{c|}{6.5} & 1 \\ \hline
\end{tabular}
\end{table}

\begin{table}[t]
\centering
\caption{SteerFlow Adaptive Mask setting}
\label{tab:mask_settings}
\begin{tabular}{|c|l|c|}
\hline
quantile ratio $q$ & sigmoid temperature $\tau$ & kernel size $k$ \\ \hline
0.95               & \multicolumn{1}{c|}{15}    & 5               \\ \hline
\end{tabular}%
\end{table}

\subsection{Hardware} 
We conduct our experiments on two servers. All experiments on FLUX.1-dev model are conducted on a single A6000 GPU, while experiments on Stable Diffusion 3.5 Medium are conducted on a single RTX3090 GPU.

\subsection{Reproducibility} 
Despite following the original experimental settings and using official implementations, our reproduced results exhibit minor discrepancies from those reported in prior works. We attribute these differences to potential updates in the Diffusers, Transformers, or PyTorch libraries, as well as hardware variation across environments. All results reported in this paper reflect our own reproduced runs.

\section{Limitations}
\label{appendix:limitations}
While SteerFlow presents an architecture-agnostic framework that excels in high-fidelity editing, it has two core limitations. First, the Adaptive Masking mechanism may sometimes restrict large structural changes required by pose and background edits. This can be mitigated by allowing greater mask expansion via the quantile ratio $q$ and morphological closing kernel size $k$, or by disabling masking entirely for such edits. Second, the optimal hyperparameters of SteerFlow may vary across model architectures, requiring some tuning of the CFG weight $w$ and decay rate $\gamma$ for each target model.

\section{A Study on Perfect Latent for Inversion-Based Editing}
\label{appendix:perfect_latent_study}
To examine whether inversion error is the primary cause of structural drift in inversion-based editing, we design a controlled experiment that eliminates it entirely: we generate the source image from a fixed noise sample $\epsilon \sim \mathcal{N}(0, I)$, then reuse that exact noise as the starting latent, changing only the editing token from 
\texttt{[cat]} to \texttt{[dog]}. As shown in \cref{fig:perfect_latent}, even with this perfect latent, the edited image still fails to preserve the source structure and background (\ie, the house). This demonstrates that structural drift stems from the divergence between source and target velocity fields, not inversion error alone. SteerFlow addresses this root cause directly via Trajectory Interpolation, faithfully preserving both the cat's structure and the background of the house.

\begin{figure}[!ht]
    \centering
    \includegraphics[width=\linewidth]{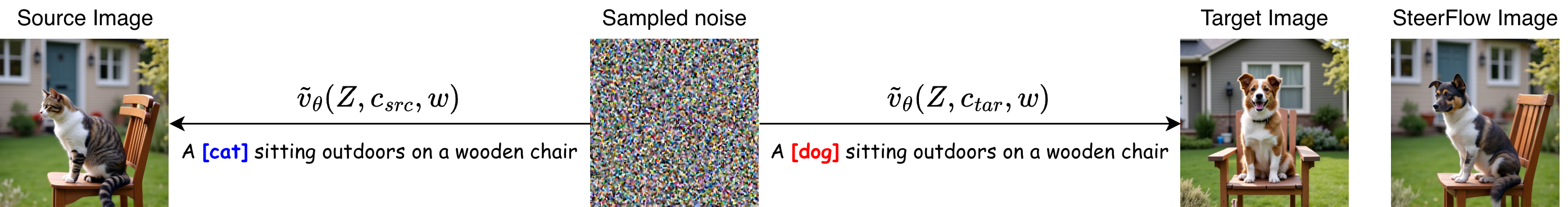}
    \caption{A sample result of inversion-based editing with a perfect latent that is used for the source image.}
    \label{fig:perfect_latent}
\end{figure}

\end{document}